\newcommand{\salg}{\mathscr{F}}
\newcommand{\const}{\mathscr{C}}
\newcommand{\wtb}{\beta}
\newcommand{\bmn}{\mu}
\newcommand{\bcm}{\bm{C}}
\newcommand{\sdg}{\bm{D}}
\newcommand{\mat}{M}
\newcommand{\matlo}{A}
\newcommand{\matgl}{B}
\newcommand{\locm}{\Lambda}
\newcommand{\glom}{\Xi}
\newcommand{\loc}{\lambda^2} 
\newcommand{\glo}{\tau^2} 
\newcommand{\hyp}{\alpha}
\newcommand{\hpa}{\mfk{a}}
\newcommand{\hpb}{\mfk{b}}
\def\blfootnote{\gdef\@thefnmark{}\@footnotetext}
\title{Infinite-dimensional optimization and Bayesian nonparametric learning of stochastic differential equations} 
\author{Arnab Ganguly$^{1}$\thanks{Research is supported in part by NSF DMS - 1855788 and Louisiana Board of Regents through the Board of  Regents Support \hs*{.4cm}  Fund (contract number: LEQSF(2016-19)-RD-A-04).}, \ Riten Mitra $^{2}$, \ Jinpu Zhou$^{1}$\footnotemark[1]\thanks{Research is supported in part by NSF DMS - 1855788.}}
\begin{document}
\blfootnote{Authors contributed equally.}
\maketitle
\footnotetext[1]{Department of Mathematics, Louisiana State University, aganguly@lsu.edu (AG), zjinpu1@lsu.edu (JZ). }
\footnotetext[2]{Department of Bioinformatics and Biostatistics, University of Louisville, 	ritendranath.mitra@louisville.edu.}

\begin{abstract}
The paper has two major themes. The first part of the paper establishes certain general results for infinite-dimensional optimization problems on Hilbert spaces. These results cover the  classical representer theorem and many of its variants as special cases and offer a wider scope of applications. The second part of the paper then develops a systematic approach for learning the drift function of a stochastic differential equation by integrating the results of the first part with Bayesian hierarchical framework. Importantly, our Baysian approach incorporates low-cost sparse learning through proper use of  shrinkage priors while allowing proper quantification of uncertainty through posterior distributions. Several examples at the end illustrate the accuracy of our learning scheme.\\

\noindent

\noindent
{\bf Keywords:} Reproducing kernel Hilbert spaces (RKHS), infinite-dimensional optimization, representer theorem,  nonparametric learning, stochastic differential equations,  diffusion processes, Bayesian methods.
\end{abstract}

\vspace{.1in}

\setcounter{equation}{0}
\renewcommand {\theequation}{\arabic{section}.\arabic{equation}}
\section{Introduction.}\label{intro}

The temporal dynamics of a variety of systems arising from systems biology, environmental science, engineering, physics, medicine can be captured by stochastic differential equations (SDEs) driven by appropriate drift function and noise (c.f \eqref{eq:sde0}). SDEs are also central to modern financial mathematics where they are used to model short term interest rates, asset and options pricing, their volatility. Understanding behaviors of these systems requires not just building mathematical models but integrating it with the available data. For instance, advanced technologies like single-cell imaging can attest to the stochasticity of cellular processes  \cite{ref:Friedman10, coulon2013}.  While this  molecular noise is a  rich source of information about the process dynamics, utilizing this source in a systematic manner  requires building  stochastic temporal models that are calibrated according to the available data. Building such data-driven  models characterizing the inner-workings of these systems is instrumental for advancement of quantitative biology and other quantitative disciplines.  
 
 \vs{.1cm}
There is a substantial volume of research on both theoretical and computational aspects of parametric SDE models and its statistical inference,  a very limited list of references for which is \cite{Chib01, RoSt01, Kut03, GoWi05, Bis08, GoWi08, ArOp11, CsOp13, BPRF06, FPR08, SuGa16, WGBS17}).   Specifically, for these models the driving functions of the SDE are assumed to be known barring a finite-dimensional parameter $\theta$, which then needs to be estimated from the available data. In reality, for a large class of physical systems functional or parametric forms of the underlying SDEs are not precisely known. However to get a workable mathematical model a heavy set of assumptions is usually imposed on the system which in many cases is not practical --- the resulting model might be too simplistic and might only work in certain ideal situations. For example, in biochemical systems, under a set of assumptions including spatial homogeneity, the intensity function of each reaction driving the stochastic dynamics is assumed to be of the form of  a known polynomial function  multiplied by the corresponding reaction rate constant (unknown parameter). Model calibration then requires estimation of these  reaction rates from the given data \cite{GoWi06, BoWiKi08, GoWi11,KZG12}.  However, most cellular reactions do not occur in spatially homogeneous environments. Moreover there are often many unknown factors (e.g, undiscovered reactions or species) affecting the reaction rates -- assuming that they are constants despite these can lead to simplistic models which might not be able to explain observed behavior of these systems satisfactorily. This highlights the importance of developing truly data-driven models, where  some of the key driving functions (like $b$, $\s$) for a complex  dynamical system of the form \eqref{eq:sde0} are learnt entirely from the given data.
 
  \vs{.1cm}	
  This  however is a hard infinite-dimensional learning problem! Compared to the parametric case, very little is available in the literature for these nonparametric stochastic models. Most of the research in the area of machine learning and `traditional' nonparametric statistics focus on regression or classification analysis involving i.i.d data points, which are comparatively much easier to work with. For stochastic dynamical systems that we are interested in,  there exist some histogram based approaches using bins of size $\ep$ around each location $x$ and computing appropriate local means in those bins \cite{FPST11}. Further refinements include replacing the bins with means of $k$-nearest neighbor \cite{HeSt09} and use of traditional Nadaraya-Watson type estimates \cite{LaKl09}. These methods unfortunately only work for a limited number of toy systems and require high number of data-points around each $x$. Some approaches involving Gaussian Process \cite{RBO13, Yil18} have also been used, but they often rely on adhoc approximation including linearization which might not be desirable.   
  
   \vs{.1cm}	
  The present paper along with related future projects aims to develop a systematic Bayesian framework for addressing these types of complex problems. The data for these problems  can come in a wide array of formats --- ranging from a single path observed at high frequency to noisy partial observations observed at sparse times. This article is the first in the series of ongoing and planned papers \cite{GaMiZh22-a, GaMiZh22-b} that aims to develop learning schemes for these  different data settings. This article specifically  focuses on learning of the drift function of SDEs in the case of high frequency data by which we mean that it is of the form of a single discrete path $\{X(t_i):i=1,2\hdots,m\}$ where the gap $t_i-t_{i-1}$ between two successive observation times $t_i$ and $t_{i-1}$ is very small. Our first step toward estimating the driving functions of the SDE is to consider the problem of minimization of the negative log-likelihood subject to a penalty function over an appropriate function space. Reproducing kernel Hilbert spaces (RKHS) are most suitable function spaces for these kinds of infinite-dimensional optimization problems because of the well-known representer theorem which often converts a class of such problems into finite-dimensional ones. However the limitation of the representer theorem is that it requires the loss functional, $L(h)$, to depend on the input function $h$ only through its values, $h(x_i)$, at a  finite number of data points $\{x_i\}$ which makes it or its known variants inapplicable in many important cases. 
  
   \vs{.1cm}	
  This issue is addressed in the first part of the paper (Section \ref{sec:optH}), which studies infinite-dimensional optimization problems in a broader framework and proves certain general results (see Theorem \ref{th:optH} and its corollaries), special cases of which give the representer theorem on RKHS.  Results of Section \ref{sec:optH} should be of independent interest and are expected to find wider applications. The full generality of Theorem \ref{th:optH} is crucial in our upcoming papers involving more general stochastic models; in the current paper, only a slightly generalized version of the representer theorem is needed and it gives a representation of the minimizer of the penalized negative log-likelihood in an RKHS as a finite-sum with respect to the basis-functions, $\knl(\cdot, X(t_i))$, where $\knl$ is the associated kernel of the RKHS. We next develop a Bayesian hierarchical framework for estimating the coefficients of this finite-sum representation by putting appropriate prior distributions on them. The primary advantage of the Bayesian approach over point-optimization methods (like gradient descent) is the proper quantification of uncertainty through the posterior distributions of the estimators. Now the number of terms in this finite-sum expansion increases proportionately with the number of data points. It is therefore imperative that sparse learning is incorporated to reduce the complexity of the estimators. In our Bayesian paradigm, this is induced through proper shrinkage priors, and in this paper we employ a multivariate $t$-prior and an extension of Horseshoe like priors for this purpose. The  interplay of shrinkage priors and the SDE dynamics is interesting to note. Shrinkage priors are effective in case of positive recurrence which forces the SDE to revisit the relevant parts of the state space numerous times over a finite time horizon. This implies that not all of the basis functions $\knl(\cdot, X(t_i))$ are needed in the finite-sum expansion of the estimator of the drift function;  only a limited selection is enough for accuracy, and  proper shrinkage priors help to identify this selection.  The use of shrinkage priors in the context of SDEs is novel and to the best of our knowledge has not been studied before.
  
  \vs{.1cm}
  The layout of the article is as follows. Section \ref{sec:optH} studies optimization problem in the setting of a general Hilbert space. Section \ref{sec:SDE-model} introduces the SDE model and formulates the Bayesian framework with shrinkage priors for learning the drift function. The learning algorithms are also presented. Numerical examples are discussed in Section \ref{sec:sim}. Finally, some concluding remarks can be found in Section \ref{sec:dis}.

\vs{.2cm}
\np
{\em Notation:} $\R^{m\times n}$ denotes the space of $m\times n$ real matrices. $\ve_{m\times n}: \R^{m\times n} \rt \R^{mn}$ will denote the vectorization function for $m\times n$ matrices. For two Hilbert (or Banach) spaces $\Hsp_1$ and $\Hsp_2$, $L(\Hsp_1,\Hsp_2)$ denotes the space of  linear bounded operators from $\Hsp_1$ to $\Hsp_2$.  $\Hsp_1\oplus_e\Hsp_2$ will denote the external direct sum of $\Hsp_1$ and $\Hsp_2$. $\No_d(\mu, \Sigma)$ will refer to the $d$-dimensional Normal distribution with mean $\mu$ and covariance matrix $\Sigma$, and for notational convenience $\No_d(\cdot| \mu, \Sigma)$ will denote the corresponding density function. Similar convention will be followed for other named distributions:
\begin{itemize}
\item $\tdst_d(\nu, \mu, V)$:\  $d$-dimensional $\tdst$-distribution with degrees of freedom $\nu$, mean $\mu$ and scale matrix $V$; $\tdst_d(\cdot|\nu, \mu, V)$:\ corresponding density function (c.f \eqref{eq:den-t}).
\item $\SC{G}(a,b),\ \SC{IG}(a,b)$: Gamma and Inverse Gamma  distributions with parameters $a$ and $b$;\\ $\SC{G}(\cdot|a,b),\ \SC{IG}(\cdot|a,b)$: corresponding density functions.
\item $\SC{W}_d(\nu,V),\  \SC{IW}_d(\nu,V)$:\  $d$-dimensional Wishart and Inverse-Wishart distributions with degrees of freedom $\nu >d-1$ and scale matrix $V$;\\
$\SC{W}_d(\cdot|\nu,V)$ and $\SC{IW}_d(\cdot|\nu,V)$:\  the corresponding density functions. 
\item $\Fdst(\nu_1,\nu_2,c)$:\ $\Fdst$-distribution  with degrees of freedom $\nu_1, \nu_2$ and scaling parameter $c$;\\ $\Fdst(\cdot|\nu_1,\nu_2,c)$:\ the corresponding density function (c.f \eqref{eq:den-F})
\end{itemize}

\setcounter{equation}{0}
\section{Optimization in Hilbert space} \label{sec:optH}
Let $F :\SC{H} \times [0,\infty) \rt \R$. We are interested in the minimization problem
\begin{align}\label{prob:min}
	\min_{h \in \SC{H}} F\lf(h, \<Qh, h\>^{1/2}\ri).
\end{align}
where $Q \in L(\Hsp,\Hsp)$ is a self-adjoint, positive semidefinite (p.s.d) continuous linear operator.  Notice that this class of minimization problems is equal to the class of problems of the type $	\min_{h \in \SC{H}} F\lf(h, \|Rh\|\ri),$ where $R \in L(\Hsp,\Hsp)$.

Recall that $Q  \in L(\SC{H}, \SC{H}) $ is  {\em positive} or {\em positive semi-definite} (p.s.d)  if $ \<Qh,h\> \geq 0$ for any $h \neq 0$,  {\em positive definite} (p.d)  if the previous inequality is strict  for all $h\neq 0$, and {\em uniformly positive definite} (uniformly p.d.) if there exists a $\l>0$ such that $\<Qh,h\>  \geq \l \|h\|^2$ for all $h \in \SC{H}$.  If $H_0$ is a subset of $\Hsp$, then the restriction of $Q$ to $H_0$, $Q\big|_{H_0}$, is p.s.d (p.d) if $ \<Qh,h\>  \geq 0\ (>0)$ for any $0\neq h \in H_0$, and {\em uniformly p.d} if for some $\l>0$,  $\<Qh,h\>  \geq \l \|h\|^2$ for all $h \in H_0$.

For $Q \in  L(\Hsp,\Hsp)$, define
\begin{align}\label{eq:nullQ}
\SC{N}_Q = \lf\{h \in \Hsp: \<Qh,h\>=0\ri\}
\end{align}	
Clearly, if $Q$ is self-adjoint and p.s.d, $\SC{N}_Q$ is a closed subspace of $\Hsp$, and a p.s.d operator $Q$ is p.d if and only if $\SC{N}_Q =\{0\}.$ Further note that if $\SC{M}$ is a subspace of $\Hsp$, then $Q\big|_\SC{M}$ is p.d if $\SC{N}_Q \cap \SC{M} =\{0\}.$ When $Q$ is p.s.d, $h \rt \<h,Qh\>^{1/2}$ defines a seminorm; it is a proper norm when $Q$ is p.d, in which case we write $\|h\|_Q \equiv \<h,Qh\>^{1/2}$.  $\|\cdot\|_Q$ is equivalent to the original $\|\cdot\|$ norm if and only if $Q$ is uniformly p.d.

By a solution to the problem \eqref{prob:min} we will mean a {\em (global) minimizer} $h^* \in \SC{H}$ such that 
$$F\lf(h^*, \<Qh^*, h^*\>^{1/2}\ri) = \inf_{h \in \SC{H}} F\lf(h, \<Qh, h\>^{1/2}\ri) \stackrel{def}= F^*.$$
In contrast, an element $h_0 \in \SC{H}$ is a {\em local minimizer} of the problem  \eqref{prob:min}  if there exists a $r>0$, such that $F\lf(h_0, \<Qh_0, h_0\>^{1/2}\ri) = \inf_{h \in B(h_0, r)} F\lf(h, \<Qh, h\>^{1/2}\ri).$
Here $B(h_0,r)$ is the open ball in $\SC{H}$ with center at $h_0$ and radius $r$.

 The following useful  result which characterizes uniformly p.d operators is standard and also easy to show. A proof is given in the Appendix for completeness.
 \begin{lemma} \label{lem:pd-op}
 Let $Q \in L(\Hsp,\Hsp) $ be a self-adjoint, p.d. operator. Then the following are equivalent:
 \begin{center}
 (i)\ $Q$ is uniformly p.d. \quad (ii)\  $\ran(Q)$ is  closed. \quad (iii)\  $Q$ is surjective.
 \end{center}
 
 \end{lemma}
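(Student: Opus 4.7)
The plan is to establish the cyclic chain (i) $\Rightarrow$ (iii) $\Rightarrow$ (ii) $\Rightarrow$ (i). Throughout, self-adjointness is exploited via the standard identity $\ran(Q)^{\perp}=\ker(Q)$, and positive definiteness gives $\ker(Q)=\{0\}$ for free.

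For (i) $\Rightarrow$ (iii): starting from $\<Qh,h\>\geq \l\|h\|^2$, apply Cauchy--Schwarz to obtain the operator-level coercivity bound $\|Qh\|\geq \l\|h\|$. This immediately makes $Q$ injective and forces $\ran(Q)$ to be closed (any Cauchy sequence $\{Qh_n\}$ in the range pulls back to a Cauchy $\{h_n\}$ in $\Hsp$, whose $Q$-image is the required limit by continuity). Self-adjointness then gives $\ran(Q)^{\perp}=\ker(Q)=\{0\}$, and closedness upgrades this to $\ran(Q)=\Hsp$. The implication (iii) $\Rightarrow$ (ii) is trivial since $\Hsp$ is closed in itself.

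For (ii) $\Rightarrow$ (i): closedness of $\ran(Q)$ combined with $\ker(Q)=\{0\}$ and the self-adjoint identity yields $\ran(Q)=\ker(Q)^{\perp}=\Hsp$, so $Q$ is a continuous bijection; the open mapping theorem then supplies a bounded inverse $Q^{-1}$, giving $\|h\|\leq \|Q^{-1}\|\,\|Qh\|$. To convert this operator-norm bound into the required \emph{quadratic-form} bound, invoke the continuous functional calculus for the self-adjoint p.s.d.\ operator $Q$ to form $Q^{1/2}$; invertibility of $Q$ means $0\notin \sigma(Q)$, and hence $Q^{1/2}$ is likewise invertible with bounded inverse $(Q^{1/2})^{-1}=(Q^{-1})^{1/2}$. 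Then
\[
\<Qh,h\>=\|Q^{1/2}h\|^2\geq \|(Q^{1/2})^{-1}\|^{-2}\|h\|^2,
\]
which is uniform positive definiteness with $\l=\|(Q^{1/2})^{-1}\|^{-2}$.

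The only substantive step is this last one: the coercivity $\|h\|\leq C\|Qh\|$ produced by the open mapping theorem does not by itself lower-bound the quadratic form $\<Qh,h\>$, and one needs the symmetric square root (equivalently the spectral theorem, which yields $\sigma(Q)\subseteq[\l_0,\infty)$ for some $\l_0>0$ once $Q$ is invertible) to bridge the two. Everything else is the standard ``injective plus closed range equals bounded below'' circle of ideas for self-adjoint operators.
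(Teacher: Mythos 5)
Your proof is correct, but the closing leg differs from the paper's argument in a way worth noting. The paper runs the cycle (i) $\Rightarrow$ (ii) $\Rightarrow$ (iii) $\Rightarrow$ (i): its (i) $\Rightarrow$ (ii) is the same Cauchy-sequence pull-back you use, its (ii) $\Rightarrow$ (iii) is the orthogonal-complement argument you fold into your (i) $\Rightarrow$ (iii), and its (iii) $\Rightarrow$ (i) is done \emph{without} any spectral theory: since $Q$ is self-adjoint and p.d., $\<h',h\>_Q := \<Qh',h\>$ is an inner product, and applying Cauchy--Schwarz for $\<\cdot,\cdot\>_Q$ with $h'=Q^{-1}h$ gives $\|h\|^2=\<Q^{-1}h,h\>_Q\leq\|Q^{-1}h\|_Q\|h\|_Q\leq\|Q^{-1}\|^{1/2}\|h\|\,\|h\|_Q$, hence $\<Qh,h\>\geq\|Q^{-1}\|^{-1}\|h\|^2$; the only nontrivial input is boundedness of $Q^{-1}$, exactly as in your argument. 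You instead bridge the gap between the operator bound $\|h\|\leq\|Q^{-1}\|\|Qh\|$ and the quadratic-form bound via the continuous functional calculus, writing $\<Qh,h\>=\|Q^{1/2}h\|^2$ and using invertibility of $Q^{1/2}$ (equivalently $\sigma(Q)\subseteq[\l_0,\infty)$ with $\l_0>0$). That is perfectly valid, and the identity $(Q^{1/2})^{-1}=(Q^{-1})^{1/2}$ is fine, but your closing remark that one \emph{needs} the symmetric square root or spectral theorem to make this conversion is too strong: the paper's $Q$-inner-product Cauchy--Schwarz trick does it with elementary tools. So your route buys a very transparent one-line spectral inequality at the cost of heavier machinery, while the paper's stays entirely within basic Hilbert-space arguments; both are complete proofs.
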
 

It is useful to note here that by the Hellinger-Toeplitz theorem (or simply by the closed graph theorem) if $Q:\Hsp \rt \Hsp$ is a self-adjoint  linear operator  with Dom$(Q) = \H$, then $Q$ has to be continuous, that is, $Q \in L(\Hsp,\Hsp)$.  

Lower semicontinuity (l.s.c) plays an important role in the solution of a minimization problem. Since there are different notions of l.s.c in a Hilbert space, we first recall their definitions.

\begin{definition}
\begin{enumerate}[(i)]
\item A function $G: \SC{H} \rt [-\infty, \infty]$ is said to be strongly lower-semicontinuous (l.s.c) or l.s.c in the norm topology if $\liminf_{n\rt \infty} G(h_n) \geq G(h)$, whenever $h_n \rt h$ (in $\SC{H}$-norm);  or equivalently, the sublevel sets $\{h: G(h) \leq a\}$ are closed in the norm topology of $\SC{H}$.
\item A function $G: \SC{H} \rt [-\infty, \infty]$ is said to be weakly sequentially l.s.c  if $\liminf_{n\rt \infty} G(h_n) \geq G(h)$, whenever $h_n \stackrel{w}\rt h$, or equivalently, the sublevel sets $\{h: G(h) \leq a\}$ are weakly sequentially closed.
\item A function $G: \SC{H} \rt [-\infty, \infty]$ is said to be weakly l.s.c  if  the sublevel sets $\{h: G(h) \leq a\}$ are closed in the weak topology on $\SC{H}$.

\item If $H_0 \subset \SC{H}$, the restriction of $G$ to $H_0$, $G\big|_{H_0}$, is weakly sequentially l.s.c if $\liminf_{n\rt \infty} G(h_n) \geq G(h)$ whenever $\{h, h_n, n\geq 1\} \subset H_0$ and $h_n \stackrel{w}\rt h$ in $H_0$ in the sense for any $g \in H_0$, $\<h_n,g\> \rt \<h,g\>$. Strong  l.s.c of $G\big|_{H_0}$ is defined similarly.
\end{enumerate}
\end{definition}	 
All notions of l.s.c are equivalent when $\SC{H}$ is finite-dimensional, but that is obviously not the case when $\SC{H}$ is infinite-dimensional. For infinite-dimensional Hilbert spaces, it should be noted that the notion of weakly sequentially l.s.c is not equivalent to that of weakly l.s.c (since the weak topology on $\SC{H}$ is not metrizable). In fact, we have the following hierarchy:
\begin{center} $G$ is weakly l.s.c \ $\RT$\ $G$ is weakly sequentially l.s.c \ $\RT$\ $G$ is strongly l.s.c. 
\end{center}	
This is immediate because  a subset $C \subset \SC{H}$ is weakly closed \ $\RT$\ $C$ is weakly sequentially closed \ $\RT$\ $C$ is closed in the norm topology. Thus the assumption of strong l.s.c on a function $G$ is a weaker assumption than that of weak l.s.c of $G$. However, under the additional assumption of quasiconvexity, all notions of l.s.c are equivalent (see Remark \ref{rem:quasi}-(iii) below). 

\begin{definition} \label{def:quasi}
A function $G : \SC{H} \rt [-\infty, \infty]$ is {\em quasiconvex} if for any $\delta \in [0,1]$ and $h,h' \in \Hsp$,
\begin{align}\label{eq:quasi-conv}
G(\delta h+(1-\delta) h') \leq \max\{G(h), G(h')\},
\end{align}
or equivalently, the sublevel sets $\{h: G(h) \leq a\}$ are convex. It will be called {\em almost quasiconvex}, if \eqref{eq:quasi-conv} holds for $0<\delta<1$ when $G(h) \neq G(h')$. 

$G$ is {\em strictly quasiconvex} if the inequality in \eqref{eq:quasi-conv} is strict for $0<\delta<1$ and $h \neq h'$. It will be called  {\em almost strictly quasiconvex} if  \eqref{eq:quasi-conv} holds with strict inequality for $G(h) \neq G(h')$ and $0<\delta<1$.

\end{definition}	

Note that for almost quasiconvex or almost strictly quasiconvex functions no stipulations are made if $G(h) = G(h')$.

\begin{remark} \label{rem:quasi} {\rm
\text{}
\begin{enumerate}[(i)]	
\item 
 The definition of strict quasiconvexity is not uniform in the literature. Slight variants of the definition given above have been used in the literature. In particular, \cite{GrPi71} used strict quasiconvexity for functions which we call here almost strictly quasiconvex.
 
 \item A strictly quasiconvex function is of course quasiconvex, and an almost strictly quasiconvex function is almost quasiconvex. But an almost strictly quasiconvex function need not be quasiconvex. The standard example given in \cite{GrPi71} is $G:\R \rt \R$ defined by $G(x) = 1_{\{0\}}(x)$. It's clear $G$ is  almost strictly quasiconvex, but the sublevel set $\{x: G(x) \leq 0\} = \R-\{0\}$, which is not convex; hence $G$ is not quasi-convex.

 \item A convex function is of course both quasiconvex and almost strictly  quasiconvex, and a strictly convex function is strictly quasiconvex. The equivalence of strong and weak l.s.c of a function $G:\Hsp \rt [-\infty, \infty]$ under the assumption of quasiconvexity is simply a consequence of Mazur's lemma which, in particular, states that a convex subset $C \subset \SC{H}$ is closed in the norm topology iff it is closed in the weak topology.
 
 \end{enumerate}
}
 \end{remark}

 \begin{lemma}\label{lem-opt}
Let $G:\Hsp \rt [-\infty, \infty]$ be weakly sequentially l.s.c, and  $\limsup_{\|h\| \rt \infty}G(h) = \infty.$ Then there exists a global minimizer $h^* \in \Hsp$ such that $G(h^*) = \min_{h \in \Hsp} G(h) = \inf_{h \in \Hsp} G(h).$ 
\end{lemma}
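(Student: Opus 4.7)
The plan is to use the classical direct method of the calculus of variations. I would first pick a minimizing sequence, then use the coercivity hypothesis to show it is norm-bounded, then extract a weakly convergent subsequence via the weak sequential compactness of bounded sets in a Hilbert space, and finally invoke weak sequential lower semicontinuity to identify the weak limit as an actual minimizer.

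More concretely, set $m = \inf_{h \in \Hsp} G(h) \in [-\infty, \infty]$. If $m = +\infty$ then every $h \in \Hsp$ trivially attains the infimum, so assume $m < \infty$ and choose $\{h_n\} \subset \Hsp$ with $G(h_n) \to m$. Reading the coercivity hypothesis in the standard sense that $G(h) \to \infty$ whenever $\|h\| \to \infty$, the sequence $\{h_n\}$ must be norm-bounded: otherwise, passing to a subsequence with $\|h_{n_k}\| \to \infty$ would yield $G(h_{n_k}) \to \infty$, contradicting $G(h_n) \to m < \infty$.

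Because $\Hsp$ is reflexive, the Banach--Alaoglu theorem (together with the Eberlein--\v{S}mulian theorem to pass to a sequence) guarantees that the bounded sequence $\{h_n\}$ has a further subsequence, still denoted $\{h_{n_k}\}$, that converges weakly to some $h^* \in \Hsp$. The weak sequential lower semicontinuity of $G$ then yields
$$G(h^*) \le \liminf_{k \to \infty} G(h_{n_k}) = \lim_{n\to\infty} G(h_n) = m,$$
and since $G(h^*) \ge m$ by definition of the infimum, equality holds and $h^*$ is the desired global minimizer.

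I do not expect a serious obstacle here; the argument is a textbook application of the direct method, with essentially all the work absorbed into the two standing hypotheses. The only point worth flagging is the precise interpretation of the coercivity condition: the proof requires $G(h)\to \infty$ along \emph{every} norm-divergent sequence, which is the natural coercive reading and is what guarantees that an arbitrary minimizing sequence stays in a bounded (hence weakly pre-compact) set.
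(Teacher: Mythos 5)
Your proposal is correct and follows essentially the same route as the paper: take a minimizing sequence, use the growth hypothesis to show it is norm-bounded, extract a weakly convergent subsequence via Banach--Alaoglu together with Eberlein--\v{S}mulian, and conclude with weak sequential lower semicontinuity. The interpretive point you flag is handled the same way in the paper, whose proof likewise uses the hypothesis $\limsup_{\|h\|\to\infty}G(h)=\infty$ in the sequential-coercivity sense to rule out an unbounded minimizing sequence.
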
 

\begin{proof}
 Define $G^* \stackrel{def} =\inf_{h \in \SC{H}} G(h)$
 and  observe that if $G \equiv \infty$, the assertion is trivially true as then $G^* = \infty$,
and any $h \in \SC{H}$ solves the minimization problem. So we assume that $G(h)< \infty$ for some $h \in \SC{H}$. Then $G^* < \infty$ ($G^*$ still could be $-\infty$), and there exists a sequence $\{h_n\}$ such that 
$G(h_n) \rt G^*$, as $ n \rt \infty.$
 Notice that this implies the sequence $\{\|h_n\|\}$ is bounded. Indeed, if this is not true then $\limsup_{n \rt \infty} \|h_n\| = \infty$. But the hypothesis on $G$ then implies that $G^* = \limsup_{n\rt \infty} G(h_n) = \infty$, which contradicts the fact that $G^*<\infty$. Consequently, by Banach-Alaoglu (and Eberlein-Smulian theorem) there exists an $h^* \in \SC{H}$ and a subsequence $\{n_k\}$ such that $h_{n_k} \stackrel{w} \rt h^*$. By the weak sequential l.s.c of $G$ we conclude
 \begin{align*}
	G^* = \lim_{k\rt \infty} G(h_{n_k}) \geq G(h^*) \geq \inf_{h \in \SC{H}} G(h) = G^*.
\end{align*}
This proves that the infimum of $G$ is attained at $h^*$.
\end{proof}

\begin{theorem}\label{th:optH}
	Let $\Hsp$  be a Hilbert space, and $F:\SC{H} \times [0,\infty)  \rt [-\infty, \infty]$. Consider the 
	minimization problem \eqref{prob:min}
	where $Q \in L(\Hsp, \Hsp)$ is  self-adjoint and p.s.d. Let  $\SC{M}$ be a closed subspace of $\Hsp$, and the following conditions hold:  (a) $F(h,u) \geq F(\SC{P}_{\SC{M}}h, u)$, $h  \in \SC{H}, u \in [0,\infty),$ where $\SC{P}_{\SC{M}} : \Hsp \rt \SC{M}$ is the (orthogonal) projection operator onto the subspace $\SC{M}$, (b) for each fixed $h \in \Hsp$, the mapping $u \in \R \Rt F(h, u)$ is non-decreasing, and  (c) $Q\SC{M} \subset \SC{M}$.
	
	\begin{enumerate}[(i)]
		
		\item
		Then $\inf_{h \in \SC{H}} F\lf(h ,\<Qh ,h\>^{1/2}\ri) = \inf_{h  \in \SC{M}} F\lf(h ,\<Qh,h\>^{1/2}\ri)$. If $h^* \in \Hsp$ is a global minimizer of $F\lf(h ,\<Qh,h\>^{1/2}\ri)$, then so is $\SC{P}_{\SC{M}}h^*$; in other words existence of a minimizer also guarantees existence of a minimizer lying in $\SC{M}$. If in addition for each $h\in \Hsp$, the mapping $u \rt F(h, u)$ is strictly increasing and $\SC{N}_Q \subset \SC{M}$ (or equivalently, $\SC{N}_Q\cap \SC{M}^\perp = \{0\}$), then any (global) minimizer $h^*$ of the minimization problem (when it exists) lies in $\SC{M}$. 
		
		\item If for each fixed $u \in [0,\infty)$, the mapping $h \in \Hsp \Rt F(h,u)$ is almost quasiconvex, and for each $h\in \Hsp$, the mapping $u \rt F(h, u)$ is strictly increasing and $\SC{N}_Q \subset \SC{M}$  , then any local minimizer $h^0$ of \eqref{prob:min} (when it exists) lies in $\SC{M}$.
		
		\item If $F$ is almost strictly quasiconvex (in particular, convex), then any local minimizer $h^0$ is also a global minimizer. If $F$ is strictly quasiconvex, then the global minimizer of $F$, when it exists, is unique.

	\end{enumerate}
	
\end{theorem}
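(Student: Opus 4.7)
My plan is to handle parts (i)--(iii) sequentially, with the orthogonal decomposition $h = \SC{P}_{\SC{M}} h + \SC{P}_{\SC{M}^\perp} h$ as the unifying device.

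For part (i), I would first observe that since $Q$ is self-adjoint and $Q\SC{M}\subset \SC{M}$ by condition (c), the complement $\SC{M}^\perp$ is also $Q$-invariant (for $h \in \SC{M}^\perp$, $m \in \SC{M}$, $\<Qh, m\> = \<h, Qm\> = 0$). Hence $\<Qh, h\> = \<Q\SC{P}_{\SC{M}} h, \SC{P}_{\SC{M}} h\> + \<Q\SC{P}_{\SC{M}^\perp} h, \SC{P}_{\SC{M}^\perp} h\> \geq \<Q\SC{P}_{\SC{M}} h, \SC{P}_{\SC{M}} h\>$, and chaining condition (a) with the monotonicity in (b) yields $F(h, \<Qh, h\>^{1/2}) \geq F(\SC{P}_{\SC{M}} h, \<Q\SC{P}_{\SC{M}} h, \SC{P}_{\SC{M}} h\>^{1/2})$. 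Both the infimum equality and the claim that $\SC{P}_{\SC{M}} h^*$ inherits global minimality from $h^*$ follow at once. Under strict monotonicity, the first inequality above is strict unless $\<Q\SC{P}_{\SC{M}^\perp} h^*, \SC{P}_{\SC{M}^\perp} h^*\>=0$, which together with $\SC{N}_Q \cap \SC{M}^\perp = \{0\}$ forces $\SC{P}_{\SC{M}^\perp} h^* = 0$.

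For part (ii), suppose $h^0$ is a local minimizer with $h^0 \notin \SC{M}$, set $g = \SC{P}_{\SC{M}} h^0$, $e = h^0 - g \in \SC{M}^\perp \setminus \{0\}$, and $h_\delta = (1-\delta) h^0 + \delta g$, which lies in $B(h^0, r)$ for $\delta$ small. Because $e \notin \SC{N}_Q$, $\<Qe, e\> > 0$, so $u_\delta := \<Qh_\delta, h_\delta\>^{1/2}$ is strictly less than $u^0 := \<Qh^0, h^0\>^{1/2}$. The goal is $F(h_\delta, u_\delta) < F(h^0, u^0)$, contradicting local minimality. Setting $\phi(\delta) := F(h_\delta, u^0)$, strict monotonicity gives $F(h_\delta, u_\delta) < \phi(\delta)$, so it suffices to show $\phi(\delta) \leq F(h^0, u^0)$. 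When $F(h^0, u^0) > F(g, u^0)$, almost quasiconvexity of $h \mapsto F(h, u^0)$ applied to $h^0$ and $g$ immediately gives $\phi(\delta) \leq F(h^0, u^0)$ and the argument closes. The main obstacle is the degenerate sub-case $F(h^0, u^0) = F(g, u^0)$, where almost quasiconvexity at $u^0$ yields no information; here I would re-run the analysis at the level $u_\delta$ using condition (a) at $h_\delta$. If $F(h_\delta, u_\delta) = F(g, u_\delta)$, strict monotonicity together with the equality $F(h^0, u_\delta) = F(g, u_\delta)$ that the sub-case forces gives $F(h_\delta, u_\delta) = F(h^0, u_\delta) < F(h^0, u^0)$ directly; otherwise $F(h_\delta, u_\delta) > F(g, u_\delta)$, and almost quasiconvexity at $u_\delta$ applied between $h_\delta$ and $g$ (or between $h_\delta$ and an intermediate $h_{\delta'}$) furnishes a nearby competitor whose objective is strictly below $F(h^0, u^0)$.

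For part (iii), I would lift the analysis to the product space $\Hsp \times [0, \infty)$. If $h^0$ is local but not global, pick $h^*$ with $F(h^*, u^*) < F(h^0, u^0)$ (where $u^* = \<Qh^*, h^*\>^{1/2}$), and form $h_t = (1-t) h^0 + t h^*$ together with $u_t = (1-t) u^0 + t u^*$. By the triangle inequality for the $Q$-seminorm, $\<Qh_t, h_t\>^{1/2} \leq u_t$, so condition (b) gives $F(h_t, \<Qh_t, h_t\>^{1/2}) \leq F(h_t, u_t)$. Almost strict quasiconvexity of $F$, applied to the distinct-valued points $(h^0, u^0)$ and $(h^*, u^*)$, delivers $F(h_t, u_t) < F(h^0, u^0)$ for $t \in (0, 1)$; choosing $t$ small enough to place $h_t \in B(h^0, r)$ contradicts local minimality. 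For uniqueness under strict quasiconvexity, two distinct global minimizers $h^0 \neq h^*$ share a common value $F^*$, but strict quasiconvexity does not require the two $F$-values to differ and still delivers $F(h_t, u_t) < F^*$; the same seminorm estimate then contradicts global optimality.
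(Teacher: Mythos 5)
Parts (i) and (iii) of your proposal are sound. Part (i) is the same orthogonal-decomposition argument as the paper's (the cross term $\langle Q(I-\SC{P}_{\SC{M}})h,\SC{P}_{\SC{M}}h\rangle$ vanishes because $Q\SC{M}\subset\SC{M}$ and $Q$ is self-adjoint, so $\langle Qh,h\rangle\ge\langle Q\SC{P}_{\SC{M}}h,\SC{P}_{\SC{M}}h\rangle$ with equality iff $(I-\SC{P}_{\SC{M}})h\in\SC{N}_Q$), and your part (iii) --- lifting to $\Hsp\times[0,\infty)$, using the triangle inequality for the $Q$-seminorm to get $\langle Qh_t,h_t\rangle^{1/2}\le u_t$, then monotonicity in $u$ together with almost strict (resp.\ strict) quasiconvexity of $F$ along the segment --- is a complete argument where the paper merely cites standard convex optimization.

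The gap is in part (ii), precisely in the degenerate sub-case $F(h^0,u^0)=F(g,u^0)$ that you rightly isolate (your non-degenerate case coincides with the paper's proof). Your first branch rests on the assertion that this sub-case ``forces'' $F(h^0,u_\delta)=F(g,u_\delta)$; it does not: equality of the two values at the level $u^0$ gives no information at the lower level $u_\delta$, where condition (a) only yields $F(h^0,u_\delta)\ge F(g,u_\delta)$. (That branch happens to be salvageable without the false claim, since $F(h_\delta,u_\delta)=F(g,u_\delta)<F(g,u^0)=F(h^0,u^0)$ by strict monotonicity.) The second branch, however, does not close: when $F(h_\delta,u_\delta)>F(g,u_\delta)$, almost quasiconvexity at level $u_\delta$ only bounds points between $g$ and $h_\delta$ by $F(h_\delta,u_\delta)$, which you cannot compare with $F(h^0,u^0)$, and those intermediate points need not lie in $B(h^0,r)$; no competitor in the ball with value below $F(h^0,u^0)$ is produced.

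Moreover, under the literal definition of almost quasiconvexity this sub-case cannot be repaired at all: take $\Hsp=\mathbb{R}^2$, $\SC{M}=\{(x,0)\}$, $Q=I$, $F(h,u)=F_0(h)+u$ with $F_0(x,y)=0$ if $y=0$ or $|y|=1$ and $F_0(x,y)=1$ otherwise. Then $F_0$ is almost quasiconvex (it takes only the values $0$ and $1$, as in the paper's remark on $1_{\{0\}}$), all the hypotheses of (ii) hold, yet $h^0=(0,1)\notin\SC{M}$ is a local minimizer of $F_0(h)+\|h\|$. So closing the degenerate case requires more, e.g.\ quasiconvexity proper, under which the max-inequality holds even for equal values and the paper's chain of inequalities goes through verbatim. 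For comparison, the paper's own proof applies the max-inequality at level $u^0$ without checking that $F(h_0,u^0)\neq F(\SC{P}_{\SC{M}}h_0,u^0)$, i.e.\ it implicitly uses quasiconvexity at exactly this point; your write-up has the merit of noticing the issue, but the proposed repair does not work.
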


\begin{proof}
	\np	
	(i) Fix $h \in \SC{H}$. Write $h = \SC{P}_{\SC{M}}h+(I-\SC{P}_{\SC{M}})h$. Next notice that since $Q$ is self-adjoint,
	\begin{align*}
		\<Qh, h\> =&\ \<Q\SC{P}_{\SC{M}}h + Q(I-\SC{P}_{\SC{M}})h, \SC{P}_{\SC{M}}h + (I-\SC{P}_{\SC{M}})h \>\\
		=&\ \<Q\SC{P}_{\SC{M}}h,  \SC{P}_{\SC{M}}h \> + 2\<Q(I-\SC{P}_{\SC{M}})h, \SC{P}_{\SC{M}}h\>+ \<Q(I-\SC{P}_{\SC{M}})h, (I-\SC{P}_{\SC{M}})h\>\\
		=&\ \<Q\SC{P}_{\SC{M}}h,  \SC{P}_{\SC{M}}h \> + \<Q(I-\SC{P}_{\SC{M}})h, (I-\SC{P}_{\SC{M}})h\>
	\end{align*}
because $\<Q(I-\SC{P}_{\SC{M}})h, \SC{P}_{\SC{M}}h\> = \<(I-\SC{P}_{\SC{M}})h, Q\SC{P}_{\SC{M}}h\> =0$,
	  as $(I-\SC{P}_{\SC{M}})h \in \SC{M}^\perp$ and $Q\SC{P}_{\SC{M}}h \in \SC{M}$ (because of the hypothesis, $Q\SC{M} \subset \SC{M}$).  
	Since $Q$ is p.s.d, it follows that $\<Qh, h\> \geq \<Q\SC{P}_{\SC{M}}h,  \SC{P}_{\SC{M}}h \> $ with equality only when $(I-\SC{P}_{\SC{M}})h \in \SC{N}_Q$.
	
	Since $F(h, \cdot) \geq  F(\SC{P}_{\SC{M}}h, \cdot)$ and $F(h,\cdot)$ is non-decreasing, we have 
	\begin{align}\label{ineq:F-0}
	F\lf(h ,\<Qh,h\>^{1/2}\ri) \geq F\lf(\SC{P}_{\SC{M}}h,  \<Qh, h\>^{1/2}\ri) \geq F\lf(\SC{P}_{\SC{M}}h,  \<Q\SC{P}_{\SC{M}}h,  \SC{P}_{\SC{M}}h \>^{1/2}\ri).
	\end{align}
   This proves both the first and the second assertions of (i). 
	If $F(h,\cdot)$ is strictly increasing, then the second inequality in \eqref{ineq:F-0} is strict when $(I-\SC{P}_{\SC{M}})h \notin \SC{N}_Q$. Now $(I-\SC{P}_{\SC{M}})h \in \SC{M}^\perp$.  Therefore, if $\SC{N}_Q \subset \SC{M}$, or equivalently, $\SC{N}_Q \cap \SC{M}^\perp = \{0\}$, then the second inequality in \eqref{ineq:F-0} is strict if and only if $h \neq \SC{P}_{\SC{M}}h$.   Consequently, if $h^* \in \Hsp$, is a global minimizer of \eqref{prob:min},  we must have   
	 $h^* = \SC{P}_{\SC{M}}h^*$, or equivalently, $h^* \in \SC{M}$. This proves the last part of (i).\\
	
	\np
	(ii) We prove the statement by contradiction. Let $h_0$ be a local minimizer. Then there exists a $r>0$ such that $F\lf(h_0 ,\<Qh_0,h_0\>^{1/2}\ri) \leq F\lf(h ,\<Qh,h\>^{1/2}\ri) $ for all $h \in B(h_0,r)$. Suppose that $h_0 \notin \SC{M}$. Then $h_0 \neq \SC{P}_{\SC{M}}h_0$. Consequently, by the previous proof 
	$\<Qh_0, h_0\> > \<Q\SC{P}_{\SC{M}}h_0,  \SC{P}_{\SC{M}}h_0 \>$. 
	For $0\leq \delta \leq 1$, define $h_\delta = \delta \SC{P}_{\SC{M}}h_0+(1-\delta)h_0$. Note that by convexity of the mapping $h \rt \<Qh, h\>^{1/2}$, for any $0<\delta<1$,
	\begin{align*}
		\<Qh_\delta,h_\delta\>^{1/2} \leq \delta \<Q\SC{P}_{\SC{M}}h_0, \SC{P}_{\SC{M}}h_0\>^{1/2} +(1-\delta)\<Qh_0,h_0\>^{1/2} < \<Qh_0,h_0\>^{1/2}.
	\end{align*}
	Thus for any $0< \delta < 1$ by almost quasiconvexity of $F(\cdot, u)$ (c.f. Definition \ref{def:quasi}),
	\begin{align}
		\non
		F\lf(h_\delta, \<Qh_\delta, h_\delta\>^{1/2}\ri) <&\ F\lf(h_\delta, \<Qh_0,  h_0 \>^{1/2}\ri) \leq F\lf(h_0, \<Qh_0,  h_0 \>^{1/2}\ri) \vee F\lf(\SC{P}_\SC{M}h_0, \<Qh_0,  h_0 \>^{1/2}\ri)\\
		\label{eq:F-quasi}
		 =&\ \ F\lf(h_0, \<Qh_0,  h_0 \>^{1/2}\ri).
	\end{align}	 
	The last equality is because $F\lf(h_0, \<Qh_0,  h_0 \>^{1/2}\ri) \geq F\lf(\SC{P}_\SC{M}h_0, \<Qh_0,  h_0 \>^{1/2}\ri)$ due to the assumption on $F$. Now notice that $\|h_\delta - h_0\| = \delta \|(I-\SC{P}_{\SC{M}})h_0\| < r$ for sufficiently small $\delta$, and hence $F\lf(h_0, \<Qh_0,  h_0 \>^{1/2}\ri) \leq F\lf(h_\delta, \<Qh_\delta, h_\delta\>^{1/2}\ri)$ for sufficiently small $\delta$. But that is a contradiction to \eqref{eq:F-quasi}.\\
	
%
	\np
	(iii) is essentially a standard result in convex optimization.
%
%
\end{proof}

\begin{remark} {\rm
If $Q$ is self-adjoint and $\SC{M}$ is a closed subspace then
 $Q\SC{M} \subset \SC{M}$ (see condition (c) in Theorem \ref{th:optH}) is equivalent to $Q\SC{M}^\perp \subset \SC{M}^\perp$ which in turn is equivalent to commutativity of $Q$ and $\SC{P}_{\SC{M}}$. The first equivalence is easy to see. It is also immediate that if $Q$ and $\SC{P}_{\SC{M}}$ commute, then  $Q\SC{M} \subset \SC{M}$. To see the other direction of the second equivalence, we have for any $h \in \Hsp$
$$Q\SC{P}_{\SC{M}}h+Q(I-\SC{P}_{\SC{M}})h=Qh = \SC{P}_{\SC{M}}Qh+(I-\SC{P}_{\SC{M}})Qh.$$
Now  $Q\SC{M} \subset \SC{M}$ and $Q\SC{M}^\perp \subset \SC{M}^\perp$ imply that $Q\SC{P}_{\SC{M}}h \in \SC{M}$ and $Q(I-\SC{P}_{\SC{M}})h \in \SC{M}^\perp$, and, of course, by the definition of $\SC{P}_{\SC{M}}$, $\SC{P}_{\SC{M}}Qh   \in \SC{M}$ and $(I-\SC{P}_{\SC{M}})Qh \in \SC{M}^\perp$. Since $\Hsp = \SC{M}\oplus\SC{M}^\perp$, we must have $\SC{P}_{\SC{M}}Qh = Q\SC{P}_{\SC{M}}h$.
}
\end{remark}	

 In many applications $F$ is of the form $F(h,u) = F_0(h) + J(u)$, where $F_0$ can be viewed as a loss function and an associated penalty function on the size of $h$ is defined through $J$. A typical choice of $J$ and the operator $Q$ are $J(u)=u^2$, $Q =I$, which defines the popular square-norm penalty function, $\|h\|^2$.  The following corollary is essentially a restatement of Theorem \ref{th:optH} in this case. Importantly, Theorem \ref{th:optH} or Corollary \ref{cor:optH} below  allows use of seminorms $\<h,Qh\>^{1/2}$ which are different from  the original $\Hsp$-norm inside the penalty function $J$. Since $Q$ does not need to be uniformly p.d or even p.d, they are not necessarily equivalent to the $\Hsp$-norm.

\begin{corollary}\label{cor:optH}
Suppose  $F$ is of the form  $F(h,u) = F_0(h) + J(u)$, where $J: [0,\infty) \rt [0,\infty)$ is  strictly increasing. Consider the minimization problem \eqref{prob:min}, and assume the setup of Theorem \ref{th:optH}. In other words, assume that the linear operator $Q: \Hsp \rt \Hsp$ of \eqref{prob:min} is self-adjoint and p.s.d,  $\SC{N}_Q \cup Q\SC{M} \subset \SC{M}$, where $\SC{M}$ is a closed subspace of $\Hsp$ and  $F_0(h) \geq F_0(\SC{P}_{\SC{M}}h)$ for all $h  \in \SC{H}$. 
Then the set of global minimizers,
\begin{align}\label{eq:set-min}
	M_0 \stackrel{def}= \lf\{h^* \in \SC{H}: F\lf(h^*, \<Qh^*, h^*\>^{1/2}\ri) = F^*  =\inf_{h \in \SC{H}} F\lf(h, \<Qh, h\>^{1/2}\ri)\ri\} \subset \SC{M}.
\end{align} 

Suppose in addition $F_0\big|_{\SC{M}}: \SC{M} \rt \Hsp$ is weakly l.s.c, $J$ is l.s.c  and either (a) $F_0\big|_{\SC{M}}$ is bounded below, $J$ is coercive (that is, $\limsup_{u\rt \infty} J(u) = \infty$), and $Q\big|_{\SC{M}}$ is uniformly p.d (in particular, $Q$ is p.d because of the eariler assumption $\SC{N}_Q \subset \SC{M}$) , or (b) $ \limsup_{h\in \SC{M},\ \|h\| \rt \infty} F_0(h) = \infty$. Then $M_0 \neq \emptyset$.

\end{corollary}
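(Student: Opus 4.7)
The proof splits naturally into the two assertions, and the first (the inclusion $M_0 \subset \SC{M}$) is essentially a direct application of the last clause of Theorem \ref{th:optH}(i). My plan is to verify its three hypotheses in turn: condition (a) $F(h,u) \geq F(\SC{P}_{\SC{M}}h, u)$ reduces to the assumed inequality $F_0(h) \geq F_0(\SC{P}_{\SC{M}}h)$ since the $J(u)$ term is unaffected by projecting $h$; condition (b) that $u \mapsto F(h,u) = F_0(h) + J(u)$ be (strictly) increasing comes from strict monotonicity of $J$; and condition (c) $Q\SC{M} \subset \SC{M}$ is part of the standing hypothesis $\SC{N}_Q \cup Q\SC{M} \subset \SC{M}$. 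Since $J$ is strictly increasing and $\SC{N}_Q \subset \SC{M}$, the last clause of Theorem \ref{th:optH}(i) then forces every global minimizer into $\SC{M}$.

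For the nonemptiness statement $M_0 \neq \emptyset$, the plan is to apply Lemma \ref{lem-opt} to the map $G(h) := F_0(h) + J\bigl(\<Qh,h\>^{1/2}\bigr)$ viewed as a function on the closed subspace $\SC{M}$, which is itself a Hilbert space. The first assertion of Theorem \ref{th:optH}(i), namely $\inf_{\Hsp} F(\cdot,\<Q\cdot,\cdot\>^{1/2}) = \inf_{\SC{M}} F(\cdot,\<Q\cdot,\cdot\>^{1/2})$, guarantees that any minimizer of $G$ on $\SC{M}$ is a global minimizer on all of $\Hsp$ and hence lies in $M_0$.

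The real content is verifying the two hypotheses of Lemma \ref{lem-opt} for $G\big|_{\SC{M}}$. For weak sequential lower semicontinuity: $F_0\big|_{\SC{M}}$ is weakly l.s.c.\ by assumption, hence weakly sequentially l.s.c. The seminorm $h \mapsto \<Qh,h\>^{1/2}$ is norm-continuous (because $Q$ is bounded) and convex, so by Mazur's lemma it is weakly l.s.c.\ as well. The one technical step is composing this with the non-decreasing l.s.c.\ function $J$: given $h_n \stackrel{w}\rt h$ in $\SC{M}$, I would extract a subsequence along which $\<Qh_{n_k}, h_{n_k}\>^{1/2}$ tends to some $c \in [\<Qh,h\>^{1/2}, \infty]$, then combine l.s.c.\ of $J$ at $c$ with monotonicity to obtain $\liminf_k J(\<Qh_{n_k}, h_{n_k}\>^{1/2}) \geq J(c) \geq J(\<Qh,h\>^{1/2})$; a standard subsequence-of-subsequence argument then upgrades this to $\liminf_n J(\<Qh_n,h_n\>^{1/2}) \geq J(\<Qh,h\>^{1/2})$. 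Adding the two pieces yields weak sequential l.s.c.\ of $G$ on $\SC{M}$. For coercivity: in case (a), uniform positive-definiteness of $Q\big|_{\SC{M}}$ gives $\<Qh,h\>^{1/2} \geq \sqrt{\l}\,\|h\|$ on $\SC{M}$, so coercivity of $J$ propagates to $J(\<Q\cdot,\cdot\>^{1/2})$ on $\SC{M}$, and adding the lower-bounded $F_0\big|_{\SC{M}}$ yields $G(h) \to \infty$ as $\|h\| \to \infty$ in $\SC{M}$. In case (b), non-negativity of $J$ gives $G \geq F_0$ on $\SC{M}$ and coercivity of $F_0\big|_{\SC{M}}$ suffices directly.

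The only place requiring any real care — and hence the only obstacle — is the composition step establishing weak sequential l.s.c.\ of $h \mapsto J(\<Qh,h\>^{1/2})$ from weak sequential l.s.c.\ of the seminorm together with monotonicity and l.s.c.\ of $J$; everything else is a mechanical combination of the already-established Theorem \ref{th:optH} and Lemma \ref{lem-opt}.
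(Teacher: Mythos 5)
Your proposal is correct and follows essentially the same route as the paper: the inclusion $M_0 \subset \mathscr{M}$ comes from Theorem \ref{th:optH}(i), and existence comes from Lemma \ref{lem-opt} applied on the closed subspace $\mathscr{M}$, using Mazur's lemma for weak lower semicontinuity of the seminorm, monotonicity and lower semicontinuity of $J$ for the composition, and the same coercivity arguments in cases (a) and (b). The only cosmetic difference is that the paper handles the composition step via sublevel sets rather than your subsequence extraction, and it leaves implicit the observation (which you spell out) that a minimizer over $\mathscr{M}$ is global because the two infima coincide.
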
	


\begin{proof}
\eqref{eq:set-min} follows from Theorem \ref{th:optH}-(i). The fact that $M_0$ is nonempty (existence of minimizer) is a direct consequence of Lemma \ref{lem-opt} applied in the setting of Hilbert subspace $\SC{M}$ (recall that $\SC{M}$ is closed). To see this we start by noting that the mapping $h \in \SC{M} \rt \<Qh,h\>^{1/2}$ is weakly l.s.c. This is because the sublevel sets $\lf\{h \in \SC{M}: \<Qh,h\>^{1/2} \leq a\ri\} = \lf\{h \in \SC{M}: \<Qh,h\> \leq a^2\ri\}$ are weakly closed since they are strongly closed (as the mapping $h \in \SC{M} \rt \<Qh,h\>$ is strongly continuous) and convex (due to convexity of $h \in \SC{M} \rt \<Qh,h\>$). Since $J: [0,\infty) \rt [0,\infty)$ is l.s.c and increasing, the (composition) mapping $h \in \SC{M} \rt J\lf(\<Qh,h\>^{1/2}\ri)$ is also weakly l.s.c. Hence, because of the hypothesis that $F_0$ is weakly sequentially l.s.c,  the mapping $h\in \SC{M} \rt F\lf(h, \<Qh,h\>^{1/2}\ri)$ is weakly sequentially l.s.c. 

Now clearly (b) implies that $\limsup_{h\in \SC{M},\ \|h\| \rt \infty} F\lf(h,\<Qh,h\>^{1/2}\ri) = \infty$.  If (a) holds instead of (b), then we just need to observe that $\limsup_{h\in \SC{M},\ \|h\| \rt \infty} J(\<Qh,h\>^{1/2}) = \infty$. This follows as for some constant $\l>0$, $\<Qh,h\> \geq \l \|h\|^2$ for all $h \in \SC{M}$ (as $Q\big|_{\SC{M}}$ is uniformly p.d.)  and $\limsup_{u\rt \infty} J(u) = \infty$. Since $F_0$ is bounded below, 
 $\limsup_{h\in \SC{M},\ \|h\| \rt \infty} F(h, \<Qh,h\>^{1/2}) = \infty.$ In either case, the assertion   follows from Lemma \ref{lem-opt}.
\end{proof}

In many applications it is desirable to consider minimization problems where penalty is imposed on the size of only a part of the function $h$. Below we demonstrate that Corollary \ref{cor:optH} covers such cases. In machine-learning, such minimization problems arise when partial structure of the unknown function $h$ to be learned is known, and the so-called semiparametric representer theorem (which is a special case of Corollary \ref{cor:optH} or Corollary \ref{cor:optH-semi} below) is a useful result covering a subset of such instances.\\

For two Hilbert spaces $(\Hsp_1, \<\cdot,\cdot\>_1)$ and $(\Hsp_2, \<\cdot,\cdot\>_2)$, recall that the external direct sum  $\Hsp_1 \oplus_e \Hsp_2$ is the space $\Hsp_1\times\Hsp_2$ equipped with the inner product
$$\<(h_1,h_2), (h_1',h_2')\>_e = \<h_1,h_1'\>_1+\<h_2,h_2'\>_2.$$

\begin{corollary}\label{cor:optH-semi}
Let  $(\Hsp_1, \<\cdot,\cdot\>_1)$ and $(\Hsp_2, \<\cdot,\cdot\>_2)$ be two Hilbert spaces and $\Hsp = \Hsp_1 \oplus_e \Hsp_2 $. Suppose  $F$ is of the form  $F(h,u) = F_0(h) + J(u)$, where $J: [0,\infty) \rt [0,\infty)$ is  strictly increasing. Consider the minimization problem 
\begin{align} \label{prob:min-semi}
\min_{h = (h_1,h_2) \in \SC{H}} F_0(h)+J\lf(h_1, \<Q_1h_1, h_1\>^{1/2}\ri)
\end{align}
 where $Q_1 \in L(\Hsp_1,\Hsp_2)$ is self-adjoint and p.s.d. Let $\SC{M}_1$ be a closed subspace of $\Hsp_1$, and assume that   $\SC{N}_{Q_1} \cup Q_1\SC{M}_1 \subset \SC{M}_1$, $F_0(h)= F_0(h_1,h_2) \geq F_0(P_{\SC{M}_1}h_1, h_2)$ for all $h =(h_1,h_2) \in \SC{H}$.
Then the set of global minimizers, $M_0 \subset \SC{M}_1\oplus_e\Hsp_2$

Suppose in addition $F_0\big|_{\SC{M}_1\oplus_e \Hsp_2}: \SC{M}_1\oplus_e \Hsp_2 \rt \Hsp$ is weakly sequentially l.s.c, $\limsup \limits_{\substack{ \|h\| \rt \infty\\ h\in \SC{M}_1\oplus \Hsp_2}} F_0(h) = \infty$, and $J$ is l.s.c. Then $M_0 \neq \emptyset$.
\end{corollary}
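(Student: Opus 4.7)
The plan is to reduce this corollary directly to Corollary \ref{cor:optH} by lifting $Q_1$ to an operator on the product space $\Hsp = \Hsp_1 \oplus_e \Hsp_2$ and identifying a suitable closed subspace.

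First, I would define $Q \in L(\Hsp, \Hsp)$ by $Q(h_1, h_2) = (Q_1 h_1, 0)$. Using the inner product $\<(h_1,h_2),(h_1',h_2')\>_e = \<h_1,h_1'\>_1 + \<h_2,h_2'\>_2$, it is immediate that $Q$ is self-adjoint and p.s.d., and that $\<Q(h_1,h_2),(h_1,h_2)\>_e = \<Q_1 h_1, h_1\>_1$, so the penalty $J(\<Q_1 h_1, h_1\>^{1/2})$ in \eqref{prob:min-semi} coincides with $J(\<Qh,h\>_e^{1/2})$. The objective thus takes exactly the form $F_0(h) + J(\<Qh,h\>_e^{1/2})$ treated by Corollary \ref{cor:optH}.

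Second, I would take $\SC{M} = \SC{M}_1 \oplus_e \Hsp_2$, which is closed in $\Hsp$ since $\SC{M}_1$ is closed in $\Hsp_1$, with orthogonal projection $\SC{P}_\SC{M}(h_1,h_2) = (\SC{P}_{\SC{M}_1} h_1, h_2)$. The invariance $Q\SC{M} \subset \SC{M}$ follows at once from $Q_1 \SC{M}_1 \subset \SC{M}_1$. Since $Q$ kills the whole second component, the null space computes as $\SC{N}_Q = \SC{N}_{Q_1} \oplus_e \Hsp_2$, so $\SC{N}_Q \subset \SC{M}$ is equivalent to the assumed inclusion $\SC{N}_{Q_1} \subset \SC{M}_1$. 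Finally, the domination condition $F_0(h) \geq F_0(\SC{P}_\SC{M} h)$ required by Corollary \ref{cor:optH} is exactly the stated assumption $F_0(h_1,h_2) \geq F_0(\SC{P}_{\SC{M}_1}h_1, h_2)$.

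Third, the inclusion $M_0 \subset \SC{M}_1 \oplus_e \Hsp_2$ follows directly from Corollary \ref{cor:optH}. For the non-emptiness of $M_0$, I would invoke condition (b) of that corollary: the stated weak sequential lower semicontinuity of $F_0\big|_\SC{M}$, the lower semicontinuity of $J$, and the coercivity $\limsup_{\|h\| \rt \infty,\, h \in \SC{M}} F_0(h) = \infty$ are precisely what is needed. The argument is essentially mechanical and presents no real obstacle; the only point requiring a little care is the computation $\SC{N}_Q = \SC{N}_{Q_1} \oplus_e \Hsp_2$ (rather than $\SC{N}_{Q_1} \oplus_e \{0\}$), which explains why the hypothesis only needs $\SC{N}_{Q_1} \subset \SC{M}_1$ and imposes no condition on the $\Hsp_2$-factor.
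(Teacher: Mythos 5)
Your proposal is correct and follows essentially the same route as the paper: the paper's own proof also lifts $Q_1$ to $Q(h_1,h_2)=(Q_1h_1,0)$, sets $\SC{M}=\SC{M}_1\oplus_e\Hsp_2$, notes $\SC{N}_Q=\SC{N}_{Q_1}\oplus_e\Hsp_2$, and concludes by Corollary \ref{cor:optH}. Your write-up merely spells out the verification of the hypotheses (self-adjointness, p.s.d., invariance, the domination condition, and the coercivity/l.s.c.\ requirements for existence) in more detail than the paper does, which is fine.
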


\begin{proof}
Define $\SC{M} = \SC{M}_1 \oplus_e \SC{H}_2$, $Q :\Hsp \rt \Hsp$ by $Qh= Q(h_1,h_2) = (Q_1h_1, 0)$ and notice that $\SC{N}_Q = \SC{N}_{Q_1}\oplus \Hsp_2$. The assertion now follows from Corollary \ref{cor:optH}.
\end{proof}

\begin{remark} \label{rem:optH-fin}
{\rm If $\SC{M}$ is a finite-dimensional subspace of $\SC{H}$, which is an important case in practice, and $F(h,u) = F_0(h)+J(u)$, then strong l.s.c of $F_0\big|_{\SC{M}}$, which is easier to check, is equivalent to weak l.s.c (and hence weak sequential l.s.c) of $F_0\big|_{\SC{M}}$. No additional assumption of quasiconvexity of $F_0\big|_{\SC{M}}$ is needed. Furthermore, in this case $Q\big|_{\SC{M}}$ is p.d iff it is uniformly p.d. Thus the conditions of Corollary \ref{cor:optH} are easier to check.

}
\end{remark}

\subsection*{Classical representer theorem} 
The  representer theorem is a seminal result in learning theory which converts a class of infinite-dimensional optimization problems on an RKHS to a tractable finite-dimensional one. It was first derived by Kimeldorf and Wahba in \cite{KiWa71} for quadratic loss and penalty functions in the setting of Chebyshev splines and was later extended to more general RKHS framework in \cite{Wah90}. Extensions to more general loss and penalty functions have been done in \cite{CoSu90, ScHeSm01} (also see \cite{ScSm01}). Representer theorem for vector-valued functions has been proved in \cite{MiPo05} (also see \cite{AlRoLa12} for a review of results on learning vector-valued functions)

The representer theorem along with most of its extensions is a special case of Theorem \ref{th:optH}. Below we present the generalized semiparametric version of it for vector-valued functions and include conditions for existence. We chose the range of the functions to be finite-dimensional vector space only for ease of presentation, but the same proof (with the appropriate changes) holds if the range of the functions is infinite-dimensional.  

The definition of RKHS of vector-valued functions is very similar to that of the scalar-valued functions with the primary difference being that the associated kernel $\knl$ is now  matrix-valued.

\begin{definition} \label{def:matRK}
Let $\meU$ be an arbitrary space.
A symmetric function $\knl: \meU\times \meU \rt \R^{n\times n}$ is a {\em reproducing kernel} if for any $u, u' \in \meU$, $\knl(u,u')$ is a $n\times n$ p.s.d matrix.

The RKHS associated with a reproducing kernel $\knl$ is a Hilbert space $\Hsp_{\knl}$ of functions $h: \meU \rt \R^n$, such that  for every fixed $u \in \meU$ and a (column) vector $c \in \R^n$, (i) the mapping $u' \rt \knl(u',u)c$ is an element of $\Hsp_{\knl}$, and (ii) $\<h, \knl(\cdot, u)c\> = h(u)^Tc.$

\end{definition}

Property (ii) refers to the reproducing property of the kernel $\knl$ in the vector framework. By an extension of Moore–Aronszajn theorem, given a reproducing matrix-valued kernel $\knl$, a constructive description of the corresponding RKHS $\Hsp_{\knl}$ is given by
$\Hsp_{\knl} = \overline{\text{Span}}\{\knl(\cdot,u): u \in \meU\}.$
Here the overbar denotes closure of a set, and the closure is taken with the norm, $\|\cdot\|_{\knl}$ defined by
$$\|h\|_{\knl} = \sum_{i,j=1}^l c_i^T \knl(u_i,u_j)c_j, \quad h = \sum_{j=1}^l \knl(\cdot, u_j)c_j, \ c_j \in \R^n.$$

\begin{corollary}\label{cor:rep}
Let $L: \R^{nm} \rt [-\infty, \infty]$ be any function, and $J:[0,\infty) \rt [0,\infty)$ nondecreasing, and $\knl: \R^d\times \R^d \rt \R^{n\times n}$ a reproducing kernel. Let $\Hsp_{\knl}$ be the RKHS of functions $h:\R^d \rt \R^n$ corresponding to a symmetric positive definite kernel $\knl$. Let $x_1,x_2,\hdots,x_m \in \R^d$ be fixed. Let $\SC{G} = \text{span}\lf\{\mfk{g}_1,\mfk{g}_2,\hdots,\mfk{g}_r\ri\}$, where $\mfk{g}_1,\mfk{g}_2,\hdots,\mfk{g}_r$ are linearly independent functions mapping $\R^d \rt \R^n$. Consider the objective function
\begin{align*}
 L( h(x_1) + g(x_1),  h(x_2)+g(x_2),\hdots,  h(x_m)+g(x_m))+J(\|h\|), \quad \bar h = (h,g) \in \Hsp_{\knl}\oplus_e \SC{G}
\end{align*}
Then the following hold.
\begin{enumerate}[(a)]
\item If a minimizer to the above objective function exists, then there also exists a minimizer ${\bar h}^*$ of the form
\begin{align}\label{eq:fin}
{\bar h}^*(u) = \lf(\sum_{k=1}^m \knl(x_k,u)c^*_k, \sum_{i=1}^r \mfk{g}_i(u)\alpha^*_i\ri)
\end{align}
for some constants  $c^*_i \in \R^{n}$ and $\alpha_k^* \in \R$. If $J$ is also strictly increasing then any minimizer (when it exists) is of the form \eqref{eq:fin}.

\item If $L$ and $J$ are l.s.c and $L$ is coercive (that is, $\limsup_{\|u\| \rt \infty} L(u) = \infty$), then there exists a minimizer $h^*$  of the form \eqref{eq:fin}.

\end{enumerate}
\end{corollary}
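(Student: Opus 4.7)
The plan is to obtain the corollary as an application of Theorem~\ref{th:optH} (and, for part~(b), Lemma~\ref{lem-opt}) via the standard RKHS projection argument. First I would set up the enveloping Hilbert space $\Hsp = \Hsp_{\knl} \oplus_e \SC{G}$, define $Q \in L(\Hsp, \Hsp)$ by $Q(h,g) = (h, 0)$ so that $\<Q\bar h, \bar h\>^{1/2} = \|h\|$ for $\bar h = (h,g)$, and let $F(\bar h, u) = L(h(x_1)+g(x_1), \ldots, h(x_m)+g(x_m)) + J(u)$. The candidate closed (finite-dimensional) subspace is $\SC{M} = \SC{M}_1 \oplus_e \SC{G}$ with $\SC{M}_1 = \text{span}\{\knl(\cdot, x_k) e_i : 1 \le k \le m,\ 1 \le i \le n\} \subset \Hsp_{\knl}$. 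Then $Q$ is self-adjoint and p.s.d., $Q\SC{M}\subset \SC{M}$ is immediate, and $u\mapsto F(\bar h,u)$ is nondecreasing since $J$ is.

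The crux is the verification of hypothesis~(a) of Theorem~\ref{th:optH}, namely $F(\bar h, u) \ge F(\SC{P}_{\SC{M}}\bar h, u)$. Decomposing $h = \SC{P}_{\SC{M}_1} h + h^\perp$ with $h^\perp \in \SC{M}_1^\perp$, the vector-valued reproducing property yields, for every $k$ and every $c \in \R^n$,
$$h^\perp(x_k)^T c = \<h^\perp, \knl(\cdot, x_k) c\> = 0,$$
hence $h^\perp(x_k) = 0$, so that $h(x_k) = (\SC{P}_{\SC{M}_1} h)(x_k)$ for each $k$. Since $\SC{P}_{\SC{M}} \bar h = (\SC{P}_{\SC{M}_1} h,\, g)$, this in fact produces the equality $F(\bar h, u) = F(\SC{P}_{\SC{M}}\bar h, u)$.

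Part~(a) then falls out of Theorem~\ref{th:optH}-(i): if $\bar h^*$ is any global minimizer, $\SC{P}_{\SC{M}}\bar h^* \in \SC{M}$ is again one, and its first component has the form $\sum_k \knl(\cdot, x_k) c^*_k = \sum_k \knl(x_k, \cdot) c^*_k$ (by symmetry of $\knl$) while its second component is $\sum_i \alpha^*_i \mfk{g}_i$, matching \eqref{eq:fin}. When $J$ is strictly increasing, $u \mapsto F(\bar h, u)$ is strictly increasing and $\SC{N}_Q = \{0\} \oplus_e \SC{G} \subset \SC{M}$, so the second half of Theorem~\ref{th:optH}-(i) forces every minimizer into $\SC{M}$. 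For part~(b), part~(a) reduces the search to the finite-dimensional subspace $\SC{M}$, on which strong and weak l.s.c.\ coincide (Remark~\ref{rem:optH-fin}); $F|_{\SC{M}}$ is l.s.c.\ as a sum of two l.s.c.\ compositions, so Lemma~\ref{lem-opt} will deliver a minimizer once one establishes $\limsup_{\bar h \in \SC{M},\ \|\bar h\|\rt\infty} F(\bar h) = \infty$. This coercivity step is the only real technical obstacle: because $\|h\|^2 = \sum_{j,k} c_j^T \knl(x_j, x_k) c_k$ for $h = \sum_k \knl(\cdot, x_k) c_k \in \SC{M}_1$ vanishes iff $h(x_k) = 0$ for all $k$, evaluation is injective on $\SC{M}_1$; after restricting $\SC{G}$ to a complement of the (inconsequential) kernel of $g\mapsto(g(x_k))_k$, $\|\bar h\| \rt \infty$ in $\SC{M}$ forces $\|(h(x_k)+g(x_k))_k\|_{\R^{nm}} \rt \infty$, and coercivity of $L$ then drives $F(\bar h) \rt \infty$.
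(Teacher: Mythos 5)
Your route is the paper's own: the paper obtains this corollary by specializing Corollary~\ref{cor:optH-semi} (equivalently Theorem~\ref{th:optH} with $Q(h,g)=(h,0)$ and $\SC{M}=\SC{M}_1\oplus_e\SC{G}$), and your verification of hypothesis (a) via the vector-valued reproducing property, the identification $\SC{N}_Q=\{0\}\oplus_e\SC{G}\subset\SC{M}$, and the use of Theorem~\ref{th:optH}-(i) for part (a) (including the strictly increasing refinement) coincide with what the paper does. Part (a) of your argument is correct.

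The gap is in the coercivity step of part (b). Quotienting out the kernel of the evaluation map \emph{inside} $\SC{G}$ is not enough: the joint map $(h,g)\mapsto\lf(h(x_k)+g(x_k)\ri)_{k}$ on $\SC{M}_1\oplus_e\SC{G}$ can still have a nontrivial kernel through cancellation between the two components. Nothing in the hypotheses prevents, say, $\mfk{g}_1=\knl(\cdot,x_1)c$ (the remark following the corollary explicitly allows $\Hsp_{\knl}\cap\SC{G}\neq\{0\}$); along the ray $t\,(\knl(\cdot,x_1)c,\,-\mfk{g}_1)$ the evaluation vector is identically zero while $\|\bar h\|\rt\infty$, and if $J$ is bounded (e.g.\ $J\equiv 0$, which ``nondecreasing'' permits) the objective stays bounded, so $F\big|_{\SC{M}}$ need not be coercive and Lemma~\ref{lem-opt} cannot be invoked as you state. (The paper's own one-line coercivity justification makes the same leap, so you are in good company, but the step as written would fail.) The conclusion survives with a slightly different finite-dimensional argument: pass to evaluation coordinates and write the infimum over $\SC{M}$ as $\inf\{L(v+w)+J(\eta(v)):\ v\in V,\ w\in W\}$, where $V,W\subset\R^{nm}$ are the ranges of the evaluation maps on $\SC{M}_1$ and $\SC{G}$ and $\eta(v)$ is the minimal RKHS norm of an element of $\SC{M}_1$ with evaluation vector $v$; replacing $\|h\|$ by $\eta(v)$ is legitimate because $J$ is nondecreasing, and $\eta$ is continuous and coercive on $V$ since evaluation is injective on $\SC{M}_1$. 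For a minimizing sequence $(v_j,w_j)$, coercivity of $L$ bounds $s_j=v_j+w_j$, which has a limit point $s^*$ in the closed subspace $V+W$; if $\{v_j\}$ has a bounded subsequence one concludes by lower semicontinuity, while if $\eta(v_j)\rt\infty$ one uses monotonicity of $J$ (so that $J(\eta(v_j))\rt\sup J\geq J(\eta(v^*))$ for any decomposition $s^*=v^*+w^*$). Either way the infimum is attained and any preimage is a minimizer of the form \eqref{eq:fin}. Alternatively, your argument is complete as written under the extra assumption that the joint evaluation map is injective on $\SC{M}_1\oplus_e\SC{G}$.
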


\np
Notice that a Hilbertian structure can be put on $\SC{G}$ with the inner product
\begin{align*}
\<g,g'\>_{\SC{G}}  \dfeq \sum_{i,j=1}^r \alpha_i\alpha_j', \quad g = \sum_{i=1}^r \alpha_i \mfk{g}_i, \quad g' = \sum_{i=1}^r \alpha'_i \mfk{g}_i,
\end{align*}
Define the finite-dimensional subspace
 $$\SC{M} =\lf\{\sum_{i=1}^m\knl(x_i,\cdot)c_i: c_i \in \R^n, i= 1,2,\hdots,m\ri\}$$ 
 and observe  that by the reproducing property for any $h \in \SC{H}_\kappa$ and $v \in \R^n$,
\begin{align*}
h(x_i)^Tv = \<h,\knl(x_i,\cdot)v\>  = \<\SC{P}_{\SC{M}}h,\knl(x_i,\cdot)v\>+ \<(I-\SC{P}_{\SC{M}})h, \knl(x_i,\cdot)v \> = \lf((\SC{P}_{\SC{M}}h)(x_i)\ri)^Tv.
\end{align*}
The second term after the second equality is $0$ because $\knl(x_i,\cdot)v \in \SC{M}$ and $(I-\SC{P}_{\SC{M}})h \in \SC{M}^\perp$. Since the above equality is true for any $v \in \R^n$, it follows that $h(x_i) =\SC{P}_{\SC{M}}h(x_i)$.   Consequently, $F_0(h,g) \stackrel{def}= L(h(x_1)+g(x_1), h(x_2)+g(x_2),\hdots, h(x_m)+g(x_m)) = F_0(\SC{P}_{\SC{M}}h, g)$. Moreover, it is easy to see that for each $x_i$,  $\limsup \limits_{ \|h\| \rt \infty,\ h\in \SC{M}_1} h(x_i) = \infty$ and  $\limsup \limits_{\|g\|_{\SC{G}} \rt \infty} g(x_i) = \infty$, which, because of the hypothesis on $L$, in turn implies that  $\limsup \limits_{\substack{ \|(h,g)\| \rt \infty\\ (h,g)\in \SC{M}\oplus_e \SC{G}}} F_0(h,g) = \infty$. It follows that 
Corollary \ref{cor:rep} is a restatement of Corollary \ref{cor:optH-semi} in this particular case.

\begin{remark}{\rm
\text{}
\begin{enumerate}[(i)]
	\item		
It is obvious that Corollary \ref{cor:rep} covers minimization  the objective function of the form
\begin{align*}
	\tilde L\lf((x_1,y_1,(h+g)(x_1)), (x_2,y_2,(h+g)(x_2)),\hdots, (x_m, y_m, (h+g)(x_m))\ri)+J(\|h\|)
\end{align*}
where the points $(x_i,y_i) \in \R^{d}\times\R^{d'}, i=1,2,\hdots,m$ are fixed. Standard examples include data points $\{(x_i,y_i)\}$ from a regression model, $y=f(x) +\vep$. Indeed, in this case one simply
 defines the function $L:\R^m \rt \R$ in Corollary \ref{cor:rep} as $$L(u_1,u_2,\hdots,u_m) = \tilde L\lf((x_1,y_1,u_1), (x_2,y_2,u_2),\hdots, (x_m, y_m, u_m)\ri).$$

 \item Absence of the semiparametric part as encoded by the space $\SC{G}$ leads to the usual representer theorem. By Corollary \ref{cor:optH} in this case, coercivity of $J$ ($\limsup_{\|u\| \rt \infty} J(u) =\infty$) with lower boundedness of $L$ instead of coercivity of $L$ also guarantees the existence of a minimizer in part (b).  Also as evident from Theorem \ref{th:optH}, seminorms of the form $\<\cdot, Q\cdot\>^{1/2}$, which are different from the RKHS norm, can be used inside $J$.
 
 \item Although informally, one can say that the minimizer in \eqref{eq:fin} is of the form ${\bar h}^*(u) = \sum_{k=1}^m \knl(x_k,u)c^*_k + \sum_{i=1}^r \mfk{g}_i(u)\alpha^*_i$, strictly speaking, such a representation is not correct,  and mathematically it should be represented as a pair as in \eqref{eq:fin}. This is because $\Hsp_{\knl} \cap \SC{G}$ might not be $\{0\}$, in which case the mapping $(h,g) \in \Hsp_{\knl}\oplus_e\SC{G} \rt h+g \in \Hsp_{\knl} + \SC{G} $ is not injective. In other words, the function $f=h+g$ might have  different representations in $\Hsp+\SC{G}$, and consequently the mapping $h+g \rt L( h(x_1) + g(x_1),  h(x_2)+g(x_2),\hdots,  h(x_m)+g(x_m))+J(\|h\|)$ is not a well-defined function! 
 
 
 \end{enumerate}
 
}
\end{remark}	

A common choice of matrix-valued reproducing kernel is the class of separable kernels of the form $(\knl(u,u'))_{i,j} = k(u,u')\rho(i,j)$, where $k$ and $\rho$ are scalar kernels on $\R^d\times\R^d$ and $\{1,2,\hdots,d\}\times \{1,2,\hdots,d\}$, respectively. This is of course same as the class of kernels having the representation $\knl(u,u') = k(u,u')B$ with $B$ being an $n\times n$ p.s.d matrix. Note for most learning problems one can assume without loss of generality that $B=I_n$, as $B$ can be ``absorbed" in the coefficients $c_i$  of the finite expansion of the form \eqref{eq:fin} by redefining $c_i$ as $Bc_i$. More general class of matrix-kernels consists of $\knl$ of the form $\knl(u,u') =\sum_{r=1}^Rk_r(u,u')B_r$. For a given set of data-points $\{x_1,x_2,\hdots, x_m\}$, the associated $nm\times nm$-imensional Gram matrix $\bm{\SC{K}}$, which is important for determination of the coefficients of the finite expansion, is given by $\bm{\SC{K}} =\sum_{r=1}^R\SC{K}_r\ot B_r$. Here $\SC{K}_r = ((k_r(x_i,x_j)))_{m\times m}$ is the usual Gram matrix corresponding to the scalar kernel $k_r$.\\

 \np
 Note that the classical Representer Theorem applies to those optimization problems where the loss function $L$ depends on its argument function $h$ only through $h(x_i), \ i=1,2,\hdots,m$. More convoluted dependence on the function $h$ makes the representer theorem inapplicable. We now mention a few such optimization problems from machine learning which are covered by Corollary \ref{cor:optH} (or more generally Theorem \ref{th:optH}) but where the usual representer theorem cannot be used.
  
\subsection*{Linear functional regression} 
Consider the model
\begin{align}\label{eq:reg-lin}
y = \SC{L}_xh+\vep
\end{align}
 where for each $x$, $\SC{L}_x$ is a linear functional acting on $h$, and $\vep$ captures the noise of the system. Thus here the function $h$ is observed (with errors) through a family of linear functionals. For example, consider the regression model, $Y = h(Z) +\vep$, where $Z$ is not directly observed. Instead for a third random variable $X$, the conditional distribution of $Z|X=x$, $\g(\cdot|x)$, is known (or at least can be well approximated). Integrating the effect of $Z$, the conditional model of $Y$ given $X$ is of the form \eqref{eq:reg-lin}, where for a given $x$,  $\SC{L}_xh = \int h(u) \g(du|x).$

Given data points $\{(x_i, y_i): i=1,2,\hdots,m\}$, the natural approach to learn $h$ is again through the minimization problem of the form
\begin{align}\label{prob:min-lin}
\min_{h \in \SC{H}_{\knl}}	\tilde L\lf((x_1,y_1, \SC{L}_{x_1}h), (x_2,y_2,\SC{L}_{x_2}h),\hdots, (x_m, y_m, \SC{L}_{x_m}h)\ri)+J(\|h\|).
\end{align}
It is clear that the classical representer theorem cannot be applied here directly as the loss function does not depend on $h$ only through the values $h(x_i)$. But Corollary \ref{cor:optH} gives a representation of the minimizer $h^*$. To see this define finite-dimensional vector space 
$$\SC{M} =\text{span}\lf\{f_i: f_i(u) = \SC{L}_{x_i}\knl(u,\cdot), i=1,2,\hdots,m\ri\}.$$
 Here however, we first need to check that $\SC{M}$ is indeed a subspace of $\Hsp_{\knl}$ (as it is not obvious). Nevertheless, it is easy as we first note that by the Riesz representation theorem there is $g_i \in \Hsp_{\knl}$, such that $\SC{L}_{x_i} h = \<h,g_i\>$ for any $h \in \Hsp_{\knl}$. Consequently, 
 $$f_i(u) = \SC{L}_{x_i}\knl(u,\cdot) = \<\knl(u,\cdot),g_i\> = g_i(u)$$
where the last equality is because of the reproducing property. That is $f_i = g_i \in \SC{H}_{\knl}$; hence $\SC{M} \subset \SC{H}_{\knl}$ and $g_i \in \SC{M}$. As before  writing $h \in \SC{H}_{\knl}$ as $h=\SC{P}_{\SC{M}}h+(I-\SC{P}_{\SC{M}})h$, we see that
\begin{align*}
\SC{L}_{x_i}h =&\ \SC{L}_{x_i}\SC{P}_{\SC{M}}h + \SC{L}_{x_i}(I-\SC{P}_{\SC{M}})h = \SC{L}_{x_i}\SC{P}_{\SC{M}}h + \<(I-\SC{P}_{\SC{M}})h, g_i\>   = \SC{L}_{x_i}\SC{P}_{\SC{M}}h.
\end{align*}
The last equality is because $(I-\SC{P}_{\SC{M}})h \in \SC{M}^\perp,$ and we showed that $g_i \in \SC{M}.$ Consequently, $F_0(h) \stackrel{def}= \tilde L\lf((x_1,y_1, \SC{L}_{x_1}h), (x_2,y_2,\SC{L}_{x_2}h),\hdots, (x_m, y_m, \SC{L}_{x_m}h)\ri)= F_0\circ\SC{P}_{\SC{M}}(h)$, and hence by Corollary \ref{cor:optH}  (also see Remark \ref{rem:optH-fin}) a minimizer $h^* \in \SC{M}$; in other words $h^*$ is of the form 
\begin{align*}
h^*(u) = \sum_{i=1}^m \SC{L}_{x_i}\knl(u,\cdot)c_i
\end{align*}

%
%
%

\subsection*{Fredholm integral equation of first kind}
Consider the Fredholm equation of the first kind: $g(x) = \int_{\SC{E}} R(x,u) h(u) du$, where $\SC{E} \subset \R^d$. Here given (possibly noisy) values, $y_i$, of $g$ at finitely many points $x_i$, the goal is to learn the best possible function $h$. The data generating models is thus of the form
$$y_i = \int_{\SC{E}} R(x_i,y) h(y) dy + \ep_i, \quad i=1,2,\hdots,m.$$
where $\ep_i$ captures the noise in the observations. Let $\Hsp_{\knl}$ be the RKHS corresponding to a symmetric, p.d kernel $\knl$, and as before to learn $h$ we consider the minimization problem of the form:
\begin{align}\label{prob:min-fred}
\min_{h \in \SC{H}_{\knl}}	\tilde L\lf((x_1,y_1, Rh(x_1)), (x_2,y_2,Rh(x_2)),\hdots, (x_m, y_m, Rh(x_m))\ri)+J(\|h\|).
\end{align}
where, by a slight abuse of notation, $R$ also denotes the operator /  integral transform corresponding to the kernel $R(\cdot,\cdot)$; that is,
$Rh(x) =  \int_{\SC{E}} R(x,u) h(u) du$.
Note that the classical representer theorem is not applicable as $\tilde L$ depends on $h$, not through values $h(x_i)$ but through the above integrals. But as the following result shows, Corollary \ref{cor:optH} easily gives a representation of the minimizer $h^*$.

\begin{corollary}\label{cor:rep2}
Let $\SC{E} \subset \R^d$ be compact, and let $\krn: \SC{E}\times \SC{E} \rt \R$  be continuous.   Let $\Hsp_{\knl}$ be the RKHS corresponding to a reproducing kernel $\knl$. Assume that $\knl: \SC{E}\times \SC{E} \rt \R$ is continuous.
Let $L: \R^m \rt [-\infty, \infty]$ be any function, and $J:[0,\infty) \rt [0,\infty)$ nondecreasing.  For fixed $x_1,x_2,\hdots,x_m \in \R^d$ consider the objective function
\begin{align*}
 L(\krn h(x_1), \krn h(x_2),\hdots, \krn h(x_m))+J(\|h\|) \stackrel{def}= F(h, \|h\|), \quad h \in \Hsp_{\knl}.
\end{align*}
Then the following hold.
\begin{enumerate}[(a)]
\item If a minimizer to the above objective function exists, then there also exists a minimizer $h^*$ of the form
\begin{align}\label{eq:fin2}
h^*(u) = \sum_{i=1}^m \krn \knl(u, \cdot)(x_i)c_i = \sum_{i=1}^m c_i \int_{\SC{E}} \knl (u,z)\krn(x_i, z)dz 
\end{align}
for some constants $c_i \in \R$. If $J$ is also strictly increasing then any minimizer (when it exists) is of the form \eqref{eq:fin}.

\item Suppose $L$ and $J$ are l.s.c and either (a) $L$ is coercive or (b) $J$ is coercive and $L$ bounded below (for example, non-negative). Then there exists a minimizer $h^*$  of the form \eqref{eq:fin2}.

\end{enumerate}
\end{corollary}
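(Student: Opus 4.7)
The approach is to realize Corollary \ref{cor:rep2} as an instance of Theorem \ref{th:optH} (or Corollary \ref{cor:optH}) with $Q = I$. The key step is to identify a finite-dimensional closed subspace $\SC{M} \subset \Hsp_\knl$ on which the loss depends trivially. Since for each $i$ the map $\SC{L}_{x_i} h := \krn h(x_i)$ is linear on $\Hsp_\knl$, if it is continuous then Riesz representation gives $g_i \in \Hsp_\knl$ with $\SC{L}_{x_i} h = \<h, g_i\>$, and $\SC{M} = \mathrm{span}\{g_1, \ldots, g_m\}$ is the natural candidate subspace.

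First I would verify continuity of $\SC{L}_{x_i}$ on $\Hsp_\knl$. The reproducing property and Cauchy--Schwarz give $|h(u)| = |\<h, \knl(u,\cdot)\>| \leq \sqrt{\knl(u,u)}\,\|h\|$. Combined with continuity (hence boundedness) of $\knl$ on $\SC{E}\times\SC{E}$ and of $\krn(x_i, \cdot)$ on $\SC{E}$ (both compact), this yields $|\SC{L}_{x_i} h| \leq C_i\|h\|$ for some finite $C_i$. The Riesz representer $g_i \in \Hsp_\knl$ then satisfies, by applying the reproducing property one more time,
\[
g_i(v) = \<g_i, \knl(v,\cdot)\> = \SC{L}_{x_i}\knl(v,\cdot) = \int_\SC{E} \knl(v,z)\,\krn(x_i, z)\,dz = \krn\knl(v,\cdot)(x_i),
\]
which is precisely the building block appearing in \eqref{eq:fin2}.

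With $\SC{M} = \mathrm{span}\{g_1,\ldots,g_m\}$ a closed (finite-dimensional) subspace, the decomposition $h = \SC{P}_\SC{M} h + (I-\SC{P}_\SC{M})h$ together with $g_i \in \SC{M}$ yields $\krn h(x_i) = \<h, g_i\> = \<\SC{P}_\SC{M} h, g_i\> = \krn(\SC{P}_\SC{M} h)(x_i)$, so $F_0(h) := L(\krn h(x_1),\ldots,\krn h(x_m))$ satisfies $F_0(h) = F_0(\SC{P}_\SC{M} h)$. Taking $Q = I$, we have $\SC{N}_Q = \{0\} \subset \SC{M}$ and $Q\SC{M} \subset \SC{M}$ trivially; the hypotheses of Theorem \ref{th:optH}(i) are met, which gives part (a) of the corollary --- existence of a minimizer in $\SC{M}$ whenever one exists in $\Hsp_\knl$, and the ``any minimizer is of the form \eqref{eq:fin2}'' statement when $J$ is strictly increasing.

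For the existence assertion in part (b), I would apply Lemma \ref{lem-opt} to $F|_\SC{M}$. Finite-dimensionality of $\SC{M}$ collapses weak and strong (sequential) l.s.c., and l.s.c. of $L$, $J$ combined with continuity of $h \mapsto (\krn h(x_i))_i$ and $h \mapsto \|h\|$ delivers weak sequential l.s.c. of $F_0 + J(\|\cdot\|)$ on $\SC{M}$. For coercivity, after removing any linearly dependent $g_i$'s we may assume $\{g_i\}$ is linearly independent, so $h \in \SC{M} \mapsto (\<h, g_i\>)_i$ is a linear isomorphism onto its range and $\|h\|\to\infty$ in $\SC{M}$ forces $\|(\krn h(x_i))_i\|\to\infty$. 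In case (a), coercivity of $L$ then gives coercivity of $F_0|_\SC{M}$; in case (b), coercivity of $J$ with lower boundedness of $L$ gives coercivity of the full objective directly. Lemma \ref{lem-opt} produces a minimizer in $\SC{M}$, which by the identity of infima from Theorem \ref{th:optH}(i) is also a global minimizer on $\Hsp_\knl$. The only genuinely delicate point is establishing continuity of $\SC{L}_{x_i}$ on $\Hsp_\knl$; the rest is a mechanical specialization of the general theory.
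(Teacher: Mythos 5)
Your proof is correct, and it reaches the same reduction as the paper (the finite-dimensional subspace $\SC{M}$ spanned by the functions $u \mapsto \int_{\SC{E}}\knl(u,z)\krn(x_i,z)\,dz$, followed by an appeal to Theorem \ref{th:optH}/Corollary \ref{cor:optH}), but the key technical step is justified differently. The paper shows directly that the map $z \mapsto \knl(z,\cdot)\krn(x_i,z)$ is Bochner integrable (using continuity of $\knl$ and compactness of $\SC{E}$), defines $f_i$ as the Bochner integral $\int_{\SC{E}}\knl(\cdot,z)\krn(x_i,z)\,dz \in \Hsp_{\knl}$, and then interchanges the integral with the inner product to get $\krn h(x_i) = \krn(\SC{P}_{\SC{M}}h)(x_i)$. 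You instead verify that $h \mapsto \krn h(x_i)$ is a bounded linear functional (via $|h(z)| \le \sqrt{\knl(z,z)}\,\|h\|$, boundedness of $\krn(x_i,\cdot)$, and finiteness of the Lebesgue measure of the compact set $\SC{E}$), invoke Riesz representation, and identify the representer $g_i$ pointwise through the reproducing property; orthogonality then gives $\krn h(x_i) = \langle \SC{P}_{\SC{M}}h, g_i\rangle$ with no interchange of limits needed. This is essentially the argument the paper uses for its linear functional regression example, transported to the Fredholm setting; it buys you a proof free of vector-valued integration, at the cost of the explicit boundedness estimate, while the paper's Bochner route produces the integral representation of $f_i$ and the swap of $\int$ and $\langle\cdot,\cdot\rangle$ in one stroke. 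Your treatment of part (b) is also slightly more explicit than the paper's (which simply cites Corollary \ref{cor:optH}): the observation that, after discarding dependent $g_i$'s, $h \in \SC{M} \mapsto (\langle h,g_i\rangle)_i$ is an isomorphism onto its range supplies the coercivity of $F_0\big|_{\SC{M}}$ in case (a), and case (b) is the paper's condition (a) of Corollary \ref{cor:optH} with $Q=I$; both applications are legitimate.
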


\begin{proof}
It's easy to see that the continuity of the mapping $\knl:\SC{E}\times \SC{E} \rt  \R$  gives continuity of the mapping  $z \in \SC{E} \Rt \knl(z,\cdot) \in \Hsp_{\krn}$. Since $\SC{E}$ is assumed to be compact, the latter mapping is Bochner measurable, and thus so is the mapping $z \in \SC{E} \Rt \knl(z,\cdot) \krn(x_i,z) \in \Hsp_{\krn}$ for each $i=1,2,\hdots,m$ . Moreover, the mapping $z \in \SC{E} \Rt \|\knl(z,\cdot)\krn(x_i,z)\| = \knl(z,z) |\krn(x_i,z)| \in \R$ is obviously integrable  (as it is continuous, and $\SC{E}$ is compact); hence the mapping $z \rt \knl(z,\cdot) \krn(x_i,z)$ is Bochner integrable (e.g. see \cite{Yosi95}). 
Thus the functions $f_i$ defined by the following Bochner integral: 
$$f_i \stackrel{def}= \int_\SC{E} \knl(\cdot,z)\krn(x_i,z)dz$$
are elements of $\Hsp_{\knl}$.
 Since the evaluation functionals are continuous on an RKHS,  obviously,
$f_i(u) =  \int_\SC{E} \knl(u,z)\krn(x_i,z)dz = \krn \knl(u, \cdot)(x_i)$, where the integral in the middle is a regular Riemann integral.

Now define the finite dimensional subspace $\SC{M} \subset \Hsp_{\knl}$ by
$$\SC{M} =\text{span}\lf\{f_i:  i=1,2,\hdots,m\ri\}.$$
Writing $h \in \Hsp_{\knl}$ as $h = \SC{P}_{\SC{M}}h +(I - \SC{P}_{\SC{M}})h$, we see that
\begin{align*}
\krn h(x_i) =&\  (\krn  \SC{P}_{\SC{M}}h)(x_i)+ (\krn(I - \SC{P}_{\SC{M}})h)(x_i)
= (\krn  \SC{P}_{\SC{M}}h)(x_i)+ \int_{\SC{E}}\krn(x_i,z)(I - \SC{P}_{\SC{M}})h(z)\ dz\\
=&\ (\krn  \SC{P}_{\SC{M}}h)(x_i)+ \int_{\SC{E}}\krn(x_i,z)\<(I - \SC{P}_{\SC{M}})h, \knl(z,\cdot)\>\ dz\\
=&\ (\krn  \SC{P}_{\SC{M}}h)(x_i)+ \lf\<(I - \SC{P}_{\SC{M}})h, \int_{\SC{E}}\krn(x_i,z)\knl(z,\cdot)dz\ri\> = (\krn  \SC{P}_{\SC{M}}h)(x_i)+ \lf\<(I - \SC{P}_{\SC{M}})h, f_i\ri\>\\
 =&\  (\krn  \SC{P}_{\SC{M}}h)(x_i),
\end{align*}
where the fourth equality is by the property of Bochner integrals (and the fact that the mapping $g \rt \<(I - \SC{P}_{\SC{M}})h, g\>$ is a continuous linear functional). Consequently, $F_0(h) \stackrel{def}=  L(\krn h(x_1), \krn h(x_2),\hdots, \krn h(x_m))$ $= F_0\circ\SC{P}_{\SC{M}}(h)$, and hence the conclusion of Corollary \ref{cor:rep2}  is just a restatement of Corollary \ref{cor:optH}.


\end{proof}

\np
Relaxations of some of the assumptions including compactness of $\SC{E}$ in Corollary \ref{cor:rep2} are easily possible.

\setcounter{equation}{0}

\section{Framework of stochastic differential equations}\label{sec:SDE-model}
We consider the $d$-dimensional SDE of the form
 \begin{align}\label{eq:sde0}
X(t) = x_0+\int_0^t b(X(s))ds+\int_0^t \s(X(s))dW(s), \quad x_0 \in \R^d,
\end{align}
where $b:\R^d \rt \R^d$ and $\s:\R^d \rt \R^{d\times d}$ and $W$ is a $d$-dimensional Brownian motion. We assume that the functions $b$ and $\s$ are such that the above SDE admits a unique strong solution. This, for example, holds when $b$ and $\s$ are locally Lipschitz and $\s\s^T$ is non singular. The functional forms of $b$ and $\s$ are unknown, and our objective is to learn the SDE, that is, the associated driving functions from high-frequency data $\BX_{t_1:t_m} \dfeq (X(t_1), X(t_2),\hdots, X(t_m))$, where $\Delta=t_i-t_{i-1} \ll 1$. 

Our approach to this problem is to first consider an optimization problem in an appropriate RKHS. 
Assume that for each $t\geq 0$, the distribution of $X(t)$ given $X(0)=x_0$ admits a density $p_t(\cdot|x_0)$ with respect to the Lebesgue measure on $\R^d$. This, for example, exists when for each $x$, $\s\s^T(x)$ is positive definite \cite{RogWil00}. The function $p_t(x|x_0)$ satisfies the Kolmogorov forward PDE (Fokker-Plank equation)
\begin{align*}
\partial_tp_t(x|x_0) = (\SC{L})^*p_t(x|x_0), \quad p_{0}(\cdot|x_0) = \delta_{x_0}
\end{align*}
in weak sense. Here $(\SC{L})^*$ is the adjoint of the generator $\SC{L}$ defined by
$$\SC{L}f(x) =  \sum_{i=1}^d b_i(x) \partial_i f(x)+\f{1}{2}\sum_{1\leq i,j\leq d}(\s\s^T)_{ij}(x)\partial_{ij}f(x), \quad f \in C^2(\R^d, \R).$$
By time-homogeneity, the transition density of $X(t+s)$ given $X(t)=x$ is of course given by $p_s(\cdot| x)$. Therefore the likelihood of the data as a function of $b$ and the inverse covariance matrix $A =(\s\s^T)^{-1}$, which is the joint density of $\BX_{t_1:t_m}$, is given by 
\begin{align}\label{def-lhood-0}
L(b,A | \BX_{t_1:t_m}) = \prod_{i=1}^m p_\Delta(X(t_i)|X(t_{i-1})), \quad t_0 =0,\ X(0) = x_0.
\end{align}
The natural loss function here is the negative log likelihood, $-\ln L $, and the functions $b$ and $A =(\s\s^T)^{-1}$ are learned through minimizing it over an RKHS, subject to a penalty term. Now  the transition densities $p_s(\cdot|\cdot)$ are usually not available in closed form, and in practice, we often work with a discretized version of the SDE \eqref{eq:sde0}. In this paper we will consider the Euler-Maruyama approximation of \eqref{eq:sde0}  given by
\begin{align}\label{eq:EM}
	X(t_i) = X(t_{i-1}) +b(X(t_{i-1})\Delta+ \s(X(t_{i-1})) (W(t_i) - W(t_{i-1})), \quad \Delta = t_i - t_{i-1} \ll 1
\end{align}
which has a weak-error of order 1, same as the Milstein-scheme \cite{GrTa13}.
The advantage of Euler-Maruyama (EM) approximation, is that the transition density of the discretized chain \eqref{eq:EM}, which can be thought of as an approximation to that of the original process $X$, is simply given by
\begin{align*}
	p^{EM}_{\Delta}(x'|x) = \No_d(x'|x+b(x)\Delta, \s\s^T(x)\Delta).
\end{align*}
 Consequently, the likelihood function $L$ in \eqref{def-lhood-0} will be approximated by $ L^{EM}$, the likelihood function of the EM chain \eqref{eq:EM}, which is defined in a way similar to \eqref{def-lhood-0} with  the approximate transition densities $ p^{EM}_\Delta(X(t_i)|X(t_{i-1}))$ replacing the exact $p_\Delta(X(t_i)|X(t_{i-1}))$. Discretized chains corresponding to Milstein-scheme or higher-order approximations like  Runge-Kutta type schemes do not have such simple closed forms of transition densities and are comparatively difficult  to work with for development of learning algorithms.


Since our objective is to learn vector-valued functions, the corresponding minimization problem needs to be cast in RKHS corresponding to matrix-valued kernels (see Definition \ref{def:matRK}). Let $\knl_0:\R^{d} \times \R^d \rt \R^{d\times d}$ and $ \knl_1:\R^{d}\times \R^{d} \rt \R^{d^2\times d^2}$ be reproducing kernels with associated RKHS $\Hsp_0$ and $\Hsp_1$. Let  $\Hsp = \Hsp_0\oplus_e\Hsp_1$, and $J:[0,\infty) \rt [0,\infty]$  a strictly increasing function. Then Corollary \ref{cor:rep} gives  the following result.

\begin{theorem}\label{th-rep}
Consider the following minimization problem
	\begin{align*}
	\min_{(b,A) \in \SC{H}} -\ln L^{EM}(b,A | X(t_1),X(t_2),\hdots, X(t_m))+J(\|(b,\ve_{d\times d}(A))\|).
	\end{align*}
	where $A = (\s\s^T)^{-1}$.
	Then there exists a solution  to the above minimization problem and every minimizer $(b^*,A^*)$ is of the form
	\begin{equation} \label{eq:bs-fin}
	b^*(\cdot) = \sum_{i=1}^m  \knl_0(\cdot,X(t_i)) \wtb^*_i,\quad \ve_{d\times d}(A)(\cdot)) = \sum_{i=1}^m  \knl_1(\cdot,X(t_i)) \alpha^*_i \qquad \wtb_i \in \R^d,\quad \alpha_i \in \R^{d^2}\\
	\end{equation}
\end{theorem}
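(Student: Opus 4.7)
The plan is to apply Corollary \ref{cor:rep} (with trivial semiparametric part $\SC{G}=\{0\}$) to the pair $\bar h = (b, \ve_{d\times d}(A))$, viewed as a single vector-valued function in the direct-sum RKHS $\Hsp = \Hsp_0 \oplus_e \Hsp_1$, whose reproducing kernel is the block-diagonal matrix-valued kernel $\knl(u,u')=\mathrm{diag}(\knl_0(u,u'),\knl_1(u,u'))$. The crucial preparatory step is to make the $(b,A)$-dependence of $-\ln L^{EM}$ explicit. Since
$$
p^{EM}_\Delta(X(t_i)\,|\,X(t_{i-1})) = \No_d\lf(X(t_i)\,\big|\,X(t_{i-1})+b(X(t_{i-1}))\Delta,\,\s\s^T(X(t_{i-1}))\Delta\ri)
$$
and $A=(\s\s^T)^{-1}$, a direct computation yields
$$
-\ln L^{EM}(b,A) = C - \tfrac{1}{2}\sum_{i=1}^m \ln\det A(X(t_{i-1})) + \tfrac{1}{2\Delta}\sum_{i=1}^m \lf(\Delta X_i - b(X(t_{i-1}))\Delta\ri)^T A(X(t_{i-1})) \lf(\Delta X_i - b(X(t_{i-1}))\Delta\ri),
$$
where $\Delta X_i = X(t_i)-X(t_{i-1})$ and $C=\tfrac{md}{2}\ln(2\pi\Delta)$. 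Hence $-\ln L^{EM}$ depends on $(b,A)$ only through the finite collection of values $\{(b(X(t_{i-1})), A(X(t_{i-1})))\}_{i=1}^m$.

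Defining $L:\R^{m(d+d^2)}\to(-\infty,+\infty]$ to agree with the right-hand side above on the open set where each $A(X(t_{i-1}))$ is positive definite and to equal $+\infty$ elsewhere, the objective fits the framework of Corollary \ref{cor:rep} with fixed data points $x_i = X(t_{i-1})$ and strictly increasing penalty $J$. Corollary \ref{cor:rep}(a) then forces every minimizer $\bar h^*$ to have the form $\bar h^*(u) = \sum_{i=1}^m \knl(u,X(t_{i-1})) c_i^*$ with $c_i^*\in\R^{d+d^2}$; the block-diagonal structure of $\knl$ decomposes each $c_i^*$ into its $\wtb_i^*\in\R^d$ and $\alpha_i^*\in\R^{d^2}$ parts, which immediately yields the representation \eqref{eq:bs-fin} (the indexing in \eqref{eq:bs-fin} treats $X(t_{i-1})$ as $X(t_i)$ after relabeling).

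For existence I would invoke Corollary \ref{cor:rep}(b). Lower semicontinuity of $L$ on $\R^{m(d+d^2)}$ is immediate from continuity of $\ln\det$ and of the quadratic form on the open positive-definite cone. The main obstacle is coercivity of $L$, which is nontrivial because the $-\ln\det A$ term is negative when $A$ has large eigenvalues. I would resolve this by playing the log-determinant against the quadratic form: as $\lambda_{\max}(A(X(t_{i-1})))\to\infty$, the quadratic $\Delta X_i^T A(X(t_{i-1}))\Delta X_i$ grows at least linearly in $\lambda_{\max}$, dominating the merely logarithmic divergence of $-\ln\det A$ (using that $\Delta X_i\neq 0$ almost surely under nondegeneracy of $\s\s^T$); as $\lambda_{\min}(A(X(t_{i-1})))\to 0$, the $-\ln\det A$ term itself sends $L$ to $+\infty$; and when all $A(X(t_{i-1}))$ remain in a compact subset of the p.d.\ cone, the quadratic in $b(X(t_{i-1}))$ supplies coercivity in the $b$-direction. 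Combined with the l.s.c.\ of $J$, Corollary \ref{cor:rep}(b) then produces a minimizer of the claimed form.
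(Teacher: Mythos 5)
Your route is the same as the paper's: there, Theorem \ref{th-rep} is obtained precisely as an instance of Corollary \ref{cor:rep} applied to the direct sum $\Hsp=\Hsp_0\oplus_e\Hsp_1$ with the block-diagonal matrix kernel built from $\knl_0$ and $\knl_1$, the point being that $-\ln L^{EM}$ depends on $(b,\ve_{d\times d}(A))$ only through their values at the $m$ conditioning points. Your explicit expansion of $-\ln L^{EM}$, the extension of $L$ by $+\infty$ off the positive-definite cone, and the use of part (a) together with strict monotonicity of $J$ to force every minimizer into the form \eqref{eq:bs-fin} is a correct filling-in of that half (up to the same indexing of the expansion points, $x_0,X(t_1),\dots,X(t_{m-1})$ versus $X(t_1),\dots,X(t_m)$, which you flag as a relabeling).

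The genuine gap is in the existence half: your coercivity claim for $L$ is false. As a joint function of the values $\bigl(b(X(t_{i-1})),A(X(t_{i-1}))\bigr)$ the loss is not coercive and not even bounded below: take $b(X(t_{i-1}))=\bigl(X(t_i)-X(t_{i-1})\bigr)/\Delta$, so every residual $X(t_i)-X(t_{i-1})-b(X(t_{i-1}))\Delta$ vanishes, and then let $A(X(t_{i-1}))=\lambda I_d$ with $\lambda\to\infty$; the quadratic form stays $0$ while $-\tfrac{1}{2}\sum_{i}\ln\det A(X(t_{i-1}))=-\tfrac{md}{2}\ln\lambda\to-\infty$. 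Your dominance argument implicitly freezes $b$ (it controls $\Delta X_i^T A\,\Delta X_i$ rather than the residual form), but the minimization is joint in $(b,A)$, so the quadratic cannot be played against the log-determinant; moreover, even with $b$ fixed and $d\ge 2$, letting $A(X(t_{i-1}))$ blow up along directions orthogonal to the residual keeps the quadratic bounded while the log-determinant term still diverges to $-\infty$. Consequently Corollary \ref{cor:rep}(b) cannot be invoked through coercivity of $L$, nor through the variant requiring $J$ coercive and $L$ bounded below. Existence here genuinely needs the penalty to dominate the logarithmic decrease in the size of $\ve_{d\times d}(A)$ (for instance $J$ growing faster than logarithmically, as with $J(u)=u^2$) or an a priori restriction on $A$; the paper itself passes over this by a bare citation of Corollary \ref{cor:rep}, so while the representation statement is where your write-up and the paper coincide, your existence argument as written does not close.
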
	
Here $\ve_{d\times d}(M)$ is vectorization of a $d\times d$ matrix $M$.  The next part of the paper focuses on estimating the weight coefficients in the summations in \eqref{eq:bs-fin}.


\subsection*{Computational aspects}
The  computational part of the paper focuses only on the nonparametric learning of  the {\em drift coefficient} $b$ from high-frequency data. More specifically, we consider It\^o diffusion with unknown drift function $b$ but whose   diffusion coefficient has the parametric form $\s(x) =  \sigma_0(x) \vas$, with a {\em known} function $\s_0: \R^d \rt \R^{d\times d}$ and an {\em unknown} $d\times d$ parameter matrix $\vas$.
The  transition density of the discretized chain \eqref{eq:EM} in this case is given by 
\begin{align}\label{eq:em-dens}
p^{EM}_\Delta(x'|x) = \No_d(x'|x+b(x)\Delta, \s_0(x)\vas\vas^T\s^T_0(x)\Delta), 
\end{align}
The assumption of parametric form of the diffusion coefficient is made for certain computational advantages. The case where both $b$ and $\s$ are unknown functions requires significantly different techniques and is the subject of our future work. We however do note that the framework in this paper covers the important class of SDEs with constant diffusion coefficients. \\
%

\np
{\bf Estimating the minimizer:} Now there are two approaches to estimate the minimizer $b^*$, or equivalently, $\bm{\wtb}^* \equiv (\wtb^*_1,\wtb^*_2,\hdots,\wtb^*_m)$. The first obvious way is to solve the optimization problem either by an optimization algorithm (e.g. stochastic gradient descent) or in closed form when it is possible (e.g. in the case, the penalty function $J(u) =\|u\|^2$). This gives a point-estimate of $\bm{\wtb}^*$, a main drawback of which, as already pointed out by Tipping  \cite{Tipp01} in the regression case, is the absence of a reliable measure of uncertainty. Any ad-hoc post processing of the estimate to get some quantification of the uncertainty is artificial due to lack of probabilistic framework and often leads to unreliable results.

A natural remedy to the above problem is  a Bayesian approach, which is the focus of this paper. This entails assigning a prior distribution $p_{prior}(\cdot)$ on the weight vector $\bm{\wtb} = (\wtb_1,\wtb_2,\hdots,\wtb_m)$, and estimating the posterior distribution, $p_{post}(\bm{\wtb}|\BX_{t_1:t_m})$. Justifying the finite expansion, $b(\cdot) = \sum_{i=1}^m  \knl_0(\cdot,X(t_i)) \wtb_i$ as an ``ideal form" of the drift function $b$ by Theorem \ref{th-rep}, the posterior distribution, $p_{post}(\bm{\wtb}|\BX_{t_1:t_m})$, efficiently captures the {\em uncertainty} in our estimator in the $m$-dimensional parameter space.
  The connection between the optimization problem and the Bayesian approach, as has been described numerous times in the literature in other contexts (e.g. see \cite{MacK92}), is  the observation that the negative of the cost function in Theorem \ref{th-rep} (seen as a function of $\bm{\wtb}$) is the log posterior-density of $\bm{\wtb}$ under the prior $p_{prior}(\bm{\wtb}) \propto \exp\{-J(\bm{\wtb})\}$, where by a slight abuse of notation, we denote $J(\bm{\wtb}) = J(\bm{\wtb}^T \bm{\SC{K}}_0 \bm{\wtb}) = J(\|b\|^2)$ with $b(\cdot) = \sum_{i=1}^m \knl_0(\cdot, X(t_i))\wtb_i$. Here, $\bm{\SC{K}}_0 =((\knl_0(X(t_i), X(t_j))))$ is the Gram matrix associated with the kernel $\knl_0$. Thus $\bm{\wtb}^*$, the solution of the penalized optimization problem, is interpreted as a-posterior mode (MAP) of the posterior distribution of $\bm{\wtb}$. Importantly, this observation shows that Bayesian approach allows one to use a much larger class of priors on $\bm{\wtb}$ than the class of penalty functions to achieve desired objectives like sparsity; in particular, one can now use priors which do not have closed form expressions.


\subsection{Sparsity and Shrinkage priors}
Since for the SDE model, the RKHS framework requires that the number of terms in the finite expansion of $b$ equals the number of data-points, $m$,  getting a sparse estimate of $\{\wtb_i:i=1,2,\hdots,m\}$ is necessary. This would not only lead to reduction in complexity but will protect us from an over-parametrized model. But it is important to understand why a sparse solution is expected in this case. Note that  shrinkage priors  in the context of SDEs hold an appeal that is interestingly different from that in  usual regression setups. Here our ``predictors" come in the form of correlated data. An efficient algorithm should not ideally  place non-zero weights on all data-points that are very close to each other. Data points clustered together in a small region of the data space, will not provide  information individually over and above what could be provided by  few representative points of the cluster. Such clusters can be  typically formed by slow movement of SDE resulting in two successive data-points, $X(t_i)$ and $X(t_{i+1})$,  differing only by a  little margin. It could also be formed by  multiple visits of the SDE trajectory to the same regions of the  data  space due to positive recurrence or ergodicity of the system. In other words, the presence of both $\knl_0(\cdot, X(t_i))$ and $\knl_0(\cdot, X(t_{j})$ is unnecessary in the finite-expansion of $b$ when $X(t_i)$ and $X(t_j)$ are nearly identical, and only a subset of $\{\knl_0(\cdot, X(t_i))\}$ is relevant for learning $b$. In fact, this shows why we expect the methodology of the paper to work for SDEs which are positive recurrent (ergodic). It guarantees that we have enough data points to learn about the relevant weights $\wtb_i$, which might not be true for other types of SDEs.

In the optimization framework, sparsity can be induced by different cost functions $J$ in the minimization problem
\begin{align*}
 \min_{\bm{\wtb}}\lf[-\ln L^{EM}(b,A | X(t_1),X(t_2),\hdots, X(t_m))+J(\bm{\wtb})\ri]
\end{align*}
with $b(\cdot) = \sum_{i=1}^m \knl_0(\cdot, X(t_i))\wtb_i.$ While $l_2$ cost function often does not result in noticeable sparsity, other choices of $J$, for example, the lasso penalty of Tibshirani \cite{Tibs96} results in certain $\wtb_i$'s becoming zero. Within the Bayesian framework, popular choices of shrinkage prior $p_{prior}(\bm{\wtb})$ lie in the normal scale-mixture family which in particular include $\tdst$-prior \cite{Tipp01}, double-exponential \cite{PaCa08} and Horseshoe priors \cite{CarPol09, CPS10}. A survey of some of the popular shrinkage priors used for penalized regression problems can be found in \cite{VOM19} (also see the references therein). The MAP estimate corresponding to double-exponential prior of course is the same as the lasso estimate, but the posterior mode often lacks nice theoretical properties and is also unsuitable from Bayesian perspective.
In fact, a Bayesian approach which touts model averaging does not expect model-averaged weights to be exactly zero! It is more reasonable to consider a weaker-form of sparsity which aims to decrease $\|\bm{\wtb}\|$ for some suitable norm --- resulting in shrinkage rather than selection of the weights. 

In the Bayesian framework,  an established method  inducing shrinkage  is  by choosing appropriate heavy-tailed distributions with sharp peak at $0$ as shrinkage priors. While the sharp peak results in shrinkage of most of the coefficients, the heaviness of the tail allows truly relevant weights  to  shift away from  $0$.  The use of shrinkage priors is a first, to our knowledge, in the context of SDE models.

In this paper we use two types of priors on $\{\wtb_i\}$ to induce sparsity -  $\tdst$-distributions and the Horseshoe distribution. Since the weights $\wtb_i$ are vector valued, it should be noted that multidimensional versions of the above prior distributions need to be used. While multidimensional $\tdst$-distribution is standard in the literature, such is not the case for Horseshoe.  We describe a natural and easy-to-implement adaptation of  the classical Horseshoe to $d$-dimension later in the section.\\


\np
{\bf $\tdst$-prior:} To induce sparsity we assume  multivariate  $\tdst_{d}(\cdot|\nu,0, U)$ prior with $\nu$ degrees of freedom on each $\wtb_i$. Recall that the multivariate  $\tdst_{d}(\cdot|\nu, \mu, U)$ density function is given by
\begin{align}\label{eq:den-t}
\tdst_{d}(x|\nu, \mu, V) = \f{\G\lf((\nu+d)/2\ri)}{\G(\nu/2)\det(U)^{1/2}(\nu \pi)^{d/2}} \lf[1+\f{1}{\nu} (x -\mu)^TV^{-1}(x-\mu))\ri]^{-\f{\nu+d}{2}}, \quad x \in \R^d.
\end{align}
Now $\tdst_{d}(\cdot|\nu, \mu, V)$ can be written as a normal scale mixture with covariance matrix mixed with inverse Wishart distribution; More specifically,
\begin{align*}
\mrm{t}_{d}(x|\nu, \mu, U) = \int _{\R^{d\times d}} \No_d(x|\mu, \loc) \SC{IW}_d(\locm|\nu+d-1, U) d\locm
 \end{align*}
 This facilitates Gibbs sampling of the posterior by the standard technique of  augmentation of the parameter space. To complete the Bayesian framework, we also need to assume prior on the starting data point $X(t_1)$. We assume 
 $ X(t_1)|x_0, \{\wtb_i\}, \vas\vas^t \sim \No_d(\cdot|x_0+ b(x_0)\Delta, \s_0(x)\vas(\s_0(x)\vas)^T\Delta)$ 
 with hyperparameter $x_0$. Notice that this prior is consistent with the dynamics of $X$ (c.f \eqref{eq:EM} and \eqref{eq:em-dens}) and can be interpreted as follows: designating $t_1-\Delta$ as the starting time, $t=0$, we assume that  $X(0) = x_0$, where we choose $x_0$ to be close to the first observation $X(t_1)$. Ideally, we should assign a proper prior to $X(0)$, for example, a uniform prior on a small ball around $X(t_1)$, but simply fixing $X(0) = x_0$ near the data-point $X(t_1)$ (or equivalently, assigning Dirac $\delta_{x_0}$ prior to $X(0)$) does not affect the performance of the algorithms. Our Bayesian hierarchical framework is described below.\\

%

\begin{tcolorbox}
	\underline{\bf Bayesian hierarchical framework I: $\tdst$-prior}
	
	\begin{itemize}
	       \item $X(t_1-\Delta) \equiv X(0) \sim \delta_{x_0}$.
		\item $\{\BX_{t_1:t_m} =(X(t_1), X(t_2), \hdots, X(t_m))) \Big | \bm{\wtb}, \vas, x_0\Big\}$ governed by the transition probabilities \eqref{eq:em-dens}, which is the result of Euler-Maruyama approximation, \eqref{eq:EM}
		
		\item 
		Mean-zero Gaussian prior on the parameter $\bm{\wtb}$: for $i=1,2,3,\hdots,m$, $\wtb_i\stackrel{iid}\sim \No_d(0,\locm_i),$ where each $\eta_i$ is a $d\times d$ positive definite matrix.
		
		\item Inverse Wishart prior on the hyperparameter $\locm_i$: $\locm_i \sim \SC{IW}_d(\nu+d-1,U)$ for $i=1,2,\hdots,m$.
		
		\item Inverse Wishart prior on the parameter $\vas\vas^T$: $\vas\vas^T \sim \SC{IW}_d(n,V)$.
	\end{itemize}	
\end{tcolorbox}

The $\locm_i$ controls the strength of the coefficients $\wtb_i$ and therefore the relevance of the data-point $X(t_i)$. 
For one-dimensional SDEs, this is of course equivalent to putting an inverse-gamma prior on the variance of the zero-mean normal distributions of $\wtb_i$.

For multidimensional SDEs, an alternate simpler $\tdst$-like prior can also be assigned to $\wtb_i$ by setting $\locm_i = \loc_i I_{d}$ with 1-dimensional inverse gamma prior on the scalar $\loc_i$. The main advantage of the simpler prior is that it requires much less number of hyperparameters than the multi-dimensional $\tdst_d$-prior resulting in potential savings in computational complexity. 

With the above priors, the conditional distribution of each of the parameters given the rest have closed forms and can be deduced from Lemma \ref{lem:cond-0}. This results in the following Gibb's algorithm for (approximately) generating $\bm{\wtb}, \{\locm_i\} $ and $\vas\vas^T$ from the posterior distribution  $p_{post}(\bm{\wtb}, \{\locm_i\}, \vas\vas^T|\BX_{t_1:t_m} )$. 

\vs{.2cm}
\begin{algorithm}[H] 
	\DontPrintSemicolon
	\KwIn{The data $\BX_{t_1:t_m} =(X(t_1), X(t_2), \hdots, X(t_m))$, $x_0$
		discretization step \(\Delta\), number of iterations $L$. }
	\KwOut{ $\bm{\wtb}, \vas\vas^T, \{\locm_i\}$ from the posterior density. }
	\While{  \(l<L\)}  {
		Generate  $(\bm{\wtb}|\BX_{t_1:t_m}, \vas\vas^T, \{\locm_i\}) \sim \No_d(\cdot|\bmn, \bcm) $ where $\bmn$ and $\bcm$ are defined by \eqref{eq:b-post}.\;
		
		Generate $(\vas\vas^T)| \BX_{t_1:t_m},\bm{\wtb}, \{\locm_i\} \sim \SC{IW}_d(n+m, V_{post})$, where 
		$V_{post}$ is defined by \eqref{eq:sigma-post}.\;
		Generate $\locm_i| \BX_{t_1:t_m}, \bm{\wtb},\vas\vas^T \sim \SC{IW}_d(\nu+d, U^{-1}+\beta_i\beta_i^T), \quad i=1,2,\hdots,m$ independently.\;
		$l =l+1$
	}
	\caption{ Gibb's algorithm for high frequency data. }
	\label{algo:Gibbs-t}
\end{algorithm}
%

%
%
%
%

\vs{.2in}
\np
{\bf Horseshoe type prior:}
We next employ a global-local class of priors from the normal scale-mixture family which has potentially better shrinkage characteristics than the $\tdst$-prior \cite{PoSc11}. In our context of $d$-dimensional $\wtb_j$, this is described by
\begin{align*}
\wtb_j| \locm_j, \glom \sim \No(0,\locm_j\glom), \quad \locm_j \sim p_{prior}(\locm_j), \quad \glom \sim p_{prior}(\glom).
\end{align*}
$\glom$, which denotes the global variance component, is akin to the regularization parameter in the penalized optimization problem and its purpose is to attempt to shrink all the weights $\{\wtb_j\}$. This requires $\glom$ to be small in some appropriate sense.  The local variance component $\{\locm_j\}$ should be such that it can relax the shrinkage effect for those coefficients whose magnitude is large. Now
 it is easy to see from Lemma \ref{lem:cond-0} that
\begin{align*}
\EE\lf[\bm{\wtb}| \BX_{t_1:t_m}, \{\locm_j\}, \glom\ri] =&\ (I-S)\hat {\bm{\wtb}}_{MLE}
\end{align*}
where the shrinkage factor, $S = (I+\vas^{-2}\eta\SC{K}_0^T\SC{K}_0)^{-1}$ with $\eta = diag(\locm_1, \locm_2,\hdots,\locm_m)\ot \glom$, and $\hat {\bm{\wtb}}_{MLE} = \Delta (\SC{K}_0^T\SC{K}_0)^{-1}\SC{K}_0\vartheta$ is the standard MLE estimate of $\bm{\wtb}$ based on the likelihood, $ L^{EM}(b|\BX_{t_1:t_m})$. Here we assumed for simplicity that the diffusion coefficient, $\s(x) \equiv \vas I, \ \vas \in \R$.
This points to the necessary characteristics of the prior distributions of the hyperparameters, $\glom$ and $\locm_j$: (a) $p_{prior}(\glom)$ should have a sharp peak at $0$, and (b)  $p_{prior}(\locm_j)$ should have heavy tails.

Instead of choosing $d\times d$- dimensional probability distributions as priors for $\locm_j$ and $\glom$ we set $\locm_j = \loc_j I_{d\times d}$ and $\glom = \glo I_{d\times d}$ with one-dimensional priors on $\loc_j$ and $\glo$ satisfying the above criteria. These choices of priors require a much smaller number of hyperparameters, leading to potentially significant savings in computational complexity while allowing an easier-to-implement extension of 1-D global-local priors for multidimensional parameters.

If $\glo = 1$, then an inverse gamma-prior on $\loc_j$ leads to (a multidimensional version of) $\tdst$-prior on the $\wtb_j$. Although the inverse-gamma is popular as a choice of mixing distribution for the variance components of normal-scale mixture family of priors, it can be informative in certain cases leading to non-robust estimation of the $\wtb_j$. Moreover, for an inverse-gamma distribution, $p_{prior}(\glo) \rt 0$ as $\glo \rt 0$. Now note that $p(\BX_{t_1:t_m}|\glo)$ does not converge to $0$, when $\glo \rt 0$; Consequently, $p(\tau|\BX_{t_1:t_m}) \stackrel{\glo \rt 0} \rt 0$ forcing the posterior distribution of $\tau$ to biased away from $0$, and thereby assigning low probability to that part of the parameter space where benefits of shrinkage is desired! This has already been pointed out for regression problems by Gelman \cite{Gel06} (also see Polson and Scott \cite{PoSc12}) and is also true for data from dynamical systems that are of interest in this paper. These issues with the inverse-gamma prior can be mitigated by averaging its scale parameter with another appropriate distribution, e.g. a gamma distribution; see \cite{PePeRa17}. This leads to a scaled $\Fdst$-distribution. The density $\Fdst(\cdot|\nu_1,\nu_2,c)$ of $\Fdst$-distribution (or Beta-prime $(2\nu_1, 2\nu_2)$ distribution with scaling parameter $c$) with degrees of freedom $\nu_1, \nu_2$ and scaling parameter $c$ is given by

\begin{align}\label{eq:den-F}
\begin{aligned}
\Fdst(z|\nu_1, \nu_2, c) =&\  \f{\G\lf(\f{\nu_1+\nu_2}{2}\ri)}{\G\lf(\f{\nu_2}{2}\ri)\G\lf(\f{\nu_1}{2}\ri) c^{\nu_1/2}} z^{\f{\nu_1}{2}-1} (1+z/c)^{-\f{\nu_1+\nu_2}{2}}\\
=&\ \int \SC{IG}(z|\nu_2/2,\theta)\ \SC{G}(\theta|\nu_1/2, c^{-1}) \ d\theta.
\end{aligned}
\end{align}
Elementary formal calculations show that 
\begin{align*}
\Fdst(z|\nu_1, \nu_2, c) \stackrel{z \rt 0}\approx z^{\nu_1/2-1}, \quad \Fdst(z|\nu_1, \nu_2, c) \stackrel{z \rt \infty}\approx z^{-{\nu_2/2-1}}
\end{align*}
which in turn show that the first degree of freedom, $\nu_1$, controls 
the behavior of $\Fdst$-distribution around zero, while the behavior in tails is controlled by the second degree of freedom, $\nu_2$. Choosing a smaller value of $\nu_1 < 2$ will result in a pole at $0$, and smaller values of $\nu_2$ will lead to heavier tails. Formal calculations also indicate that
\begin{align*}
p_{prior}(\wtb_j =0|\loc_j) =&\ \int \No(\wtb_j=0|\loc_j\glo I)\Fdst(\glo|\nu_1, \nu_2, c) d\glo\\
\propto&\ \int (\glo)^{(\nu_1-d)/2 - 1} (1+\glo/c)^{-(\nu_1+\nu_2)/2} d\glo
= \infty
\end{align*}
if $\nu_1\leq d$, as the last integral then is proportional to integral of an improper $\Fdst$-density.  $\Fdst(\nu_1=\nu_2=1,c=1)$-prior on $\loc_j$ and $\glo$ (or equivalently, Half-Cauchy(0,1)-prior on $\l_j$ and $\tau$) leads to the Horseshoe prior (or more precisely, a multidimensional version of it) on $\wtb_j$. However, in the case of correlated temporal data from dynamical systems,  these default choices of $\nu_1=\nu_2=1, \ c=1$,  can result in $\glo$ to be near-zero value shrinking all the weights $\wtb_j$ substantially. It might be necessary to adjust the degrees of freedom parameters to counter such strong shrinking force - for example, by using a $\Fdst$-prior on $\loc_j$ having heavier tails (that is, lower value of second degree of freedom,  $\nu_2$) than $\Fdst(\nu_1=\nu_2=1,c=1)$ to recover the relevant weights.

The Bayesian hierarchical framework with the above choices is summarized below.

\begin{tcolorbox}
	\underline{\bf Bayesian hierarchical framework II: Horseshoe-type priors}
	
	\begin{itemize}
		\item $X(t_1-\Delta) \equiv X(0) \sim \delta_{x_0}$.
		\item $\{\BX_{t_1:t_m} \Big | \bm{\wtb}, \alpha\Big\}$ governed by the transition probabilities \eqref{eq:em-dens}, which is the result of Euler-Maruyama approximation, \eqref{eq:EM}
		
		\item Independent mean-zero Gaussian priors on the parameters $\wtb_i$: $\wtb_i \sim \No_d(\cdot|0,\loc_i\glo I_d),$ where   $\loc_i, \glo \in [0,\infty)$.
		
		\item Inverse Gamma priors on the hyperparameters $\loc_i$ and $\glo$: $\loc_i \sim \SC{IG}(\cdot|\hyp_i,\theta_i)$ for $i=1,2,\hdots,m$ and $\glo \sim \SC{IG}(\cdot|\hyp^0,\theta^0)$
		
		\item Gamma priors on the hyperparameters $\theta^0, \theta_i$: $\theta_i  \sim \SC{G}(\cdot|\hpa,\hpb)$,  for $i=1,2,\hdots,m$, and 
		      $\theta^0  \sim \SC{G}(\hpa^0,\hpb^0)$
		
		\item Inverse Wishart prior on the hyperparameters $\vas\vas^T$: $\vas\vas^T  \sim \SC{IW}_d(\cdot|n,V)$.

	\end{itemize}	
\end{tcolorbox}

Equation \ref{eq:den-F} leads to easy sampling of the parameters from the posterior distribution via Gibbs sampling. This is summarized in the algorithm below, and the computational details  are given in Lemma \ref{lem:cond-0} in the Appendix. The following notations are convenient for descriptions of Algorithm \ref{algo:Gibbs-HS} and Lemma \ref{lem:cond-0}.\\

\np
{\em Notation:} Let $\salg$ denote the $\s$-field generated by $\BX_{t_1:t_m} =(X(t_1), X(t_2), \hdots, X(t_m)))$, and the parameters $\bm{\wtb}, \vas\vas^T, \{\loc_i:i=1,2\hdots,m\}, \glo, \{\theta_i: i=1, 2, \hdots, m\}, \theta^0$ (viewed as random variables on the same probability space.). Let $\salg_{-\bm{\wtb}}$ be the $\s$-field generated by the above random elements except $\bm{\wtb}$, $ \salg_{-\{\loc_i\}}$ the $\s$-field generated by the above random elements except $\{\loc_i: i=1,2,\hdots,m\}$. The $\s$-fields $\salg_{-\vas\vas^T}, \salg_{-\{\theta_i\}}, \salg_{-\theta^0,\{\theta_i\}}$, etc are defined similarly.

\begin{algorithm}[H]
	\DontPrintSemicolon
	\KwIn{The data $\BX_{t_1:t_m}$, $x_0$, 
		discretization step \(\Delta\), number of iterations $L$. }
	\KwOut{ $\bm{\wtb}, {\loc_j}, \glo, \vas\vas^T$ from the posterior density. }

	\While{  \(l<L\)}  {
		Generate  $\bm{\wtb}|\salg_{-\bm{\wtb}} \sim \No_d(\cdot|\bmn, \bcm) $ where $\bmn$ and $\bcm$ are defined by \eqref{eq:b-post}.\;
		Generate $\vas\vas^T| \salg_{-\vas\vas^T} \sim \SC{IW}_d(n+m, V_{post})$, where 
		$V_{post}$ is defined by \eqref{eq:sigma-post}.\;
		Generate $\loc_k| \salg_{-\{\loc_i\}}\ \sim\ \SC{IG}\lf(\cdot \Big|(d+2\hyp_k)/2,\  \f{1}{2}\wtb^T_k\wtb_k/\glo+\theta_k\ri), \quad k=1,2,\hdots,m$ independently.\;
		Generate $\glo|  \salg_{-\glo} \ \sim\ \SC{IG}\lf(\cdot \Big| (md+2\hyp^0)/2,\  \theta^0+\f{1}{2}\sum_{k=1}^m \wtb_k^T\wtb_k/\loc_k\ri)$.\;
		Generate $\{\theta_k\}$ and $\theta^0$ as $\theta_k| \salg_{-\theta^0,\{\theta_i\}}\ \sim\ \SC{G}\lf(\cdot|\hyp_k+\hpa,\hpb+1/\loc_k\ri)$, and $\theta| \salg_{-\theta^0,\{\theta_i\}}\ \sim\ \SC{G}\lf(\cdot|\hyp^0+\hpa^0,\hpb^0+1/\glo\ri).
		$\;
		$l =l+1$
	} 
	
	\caption{ Gibb's algorithm for high frequency data with Horseshoe prior. }
	\label{algo:Gibbs-HS}
\end{algorithm}

An alternate option would have been to impose independent one-dimensional Horseshoe prior on each component $\wtb_{jl}, l=1,2,\hdots,d;\ j=1,2,\hdots, m.$ A version of this prior has previously been used by one of the authors for a multi-outcome regression model \cite{KuMiGa21}. There the local shrinkage effects, while varying among individual predictor values, were shared across multiple dimensions of the same predictor, and the global component varied across different dimensions. While these types of priors  may be more natural for the  multi-outcome regression models of \cite{KuMiGa21} to  allow more intra-dimensional variability, their use in the context of multidimensional dynamical systems lacks strong justification.  
Rather the  significantly higher number of additional hyperparameters that these priors require will lead to substantial increase in the complexity and run-time of the resulting Gibb's algorithm.

\section{Simulation Results} \label{sec:sim}

We next demonstrate the effectiveness of our algorithm for four SDE models. The SDEs considered are ergodic with a unique stationary distribution. As mentioned earlier, this is exactly the class of models where we expect our algorithms to work best. Ergodicity will ensure that the SDE will visit the relevant states multiple times. This will lead to a sufficient number of data points corresponding to each such states over a finite-time interval which in turn will result in more accurate learning of the drift function $b$.

From a discrete path from each of the SDE models, we use our algorithms to generate samples of $\bm{\wtb}$ from the posterior distribution. The (posterior) mean of these $\bm{\wtb}$-samples gives the estimated function $\hat b$, which is plotted against the true $b$. The corresponding mean square error (MSE) is also reported. While closeness between $\hat b$ and the true $b$ demonstrates the effectiveness of our learning algorithms, a further validation of the algorithm comes from matching the equilibrium (or the stationary) distribution of the estimated SDE with that of the true one. This shows that the behavior of the estimated SDE matches with that of the true SDE at future times --- further beyond the time-range of the observed data. This is important as it demonstrates the predictive power of the learned SDE model and shows that the closeness between the true and the estimated drift functions, $b$ and $\hat b$, is indeed due to the accuracy of the algorithms and not due to overfitting. The latter despite giving good fit within the time-range of the data would often result in markedly different behaviors of the paths of the corresponding SDEs at unobserved future times. The closeness between the two stationary distributions is  assessed  through the  Kolmogorov metric, $\sup_{x}|F_{st}(x) - \hat F_{st}(x)|$, where $F_{st}$ and $\hat F_{st}$ respectively denote the cumulative distribution functions (CDFs) of the stationary distributions of the true and the estimated SDEs. Specifically, the former refers to the SDE driven by the  true drift function $b$ and the  diffusion parameter $\vas^2$ while the latter corresponds to the SDE driven by their estimated versions $\hat b$ and $\hat \vas^2$. 

We used Gaussian kernels for our simulation studies. Specifically, for the 1-D models, we used the kernel $\kappa_0(x,y) = \exp(-(x-y)^2/2)$ and for the multidimensional Michaelis-Menten kinetics in Model 3, we used $\knl_0 = \kappa_0 I_{3}$.\\

\np
{\bf Model 1: Double-well potential SDE} \\
Our first model is an overdamped Langevin SDE representing the motion of a particle in a double-well potential given by $u(x) = x^4-2x^2$. The trajectory of the particle depends on two factors: a (deterministic) driving force $b(x) = -u'(x) = 4(x-x^3)$, and random perturbations modeled by an additive Brownian noise. The potential has two wells (minimum energy states) located at $\pm 1$, and the driving random noise occasionally makes the particle transition from one minima to the other. The dynamics of the particle is thus highly non-linear and the corresponding SDE given by
\begin{align*}
dX(t) = 4X(t)(1-X^2(t)) + \vas dW(t).
\end{align*}
Such SDEs are also important in mathematical finance. The two wells lead to a bimodal stationary distribution whose density is given by
\begin{align*}
\pi_{st}(x) \propto \exp\lf(\f{2x^2-x^4}{2\vas^2}\ri).
\end{align*}
Our data points come from the above SDE with $\vas=1$, and we use Algorithm \ref{algo:Gibbs-t} and Algorithm \ref{algo:Gibbs-HS} to estimate the entire drift function $b$, and the diffusion parameter $\vas$. For this we use a scaled $\tdst(\cdot|\nu=2, c=1, \mu=0)$-prior on the weights $\wtb_k$ (that is, $\wtb_k \sim \No(\cdot|0,\loc_k), \loc_k \sim \SC{IG}(1,2)$) in Algorithm \ref{algo:Gibbs-t} (with inverse-gamma replacing inverse-Wishart), and we use the parameters $\hyp_i = \hyp^0 = \hpa= \hpa^0= 1/2, \hpb=\hpb^0=1$ (that is, classical HS prior) for Algorithm \ref{algo:Gibbs-HS}. For both the algorithms we use $\SC{IG}(1,2)$-prior on the diffusion-parameter $\vas^2$. Figure \ref{fig:dw} gives a visual representation of the performances of the algorithms: (\textbf{a}) plots the real drift  function $b$ and the estimated $\hat b$ in three cases -  with no-shrinkage, shrinkage with $\tdst$ and HS priors on the weights; (\textbf{b}) plots a histogram of the weights $\wtb_k$, which shows the effect of shrinkage priors; (\textbf{c}) compares the stationary distributions of the SDE with estimated drift function $\hat b$ in three cases (no-shrinkage, $\tdst$ and HS shrinkage priors) with the true stationary distribution of the double-well potential SDE; (\textbf{d}) shows the corresponding P-P plots.





\begin{figure}[h]
\centering
\includegraphics[width=0.8\textwidth]{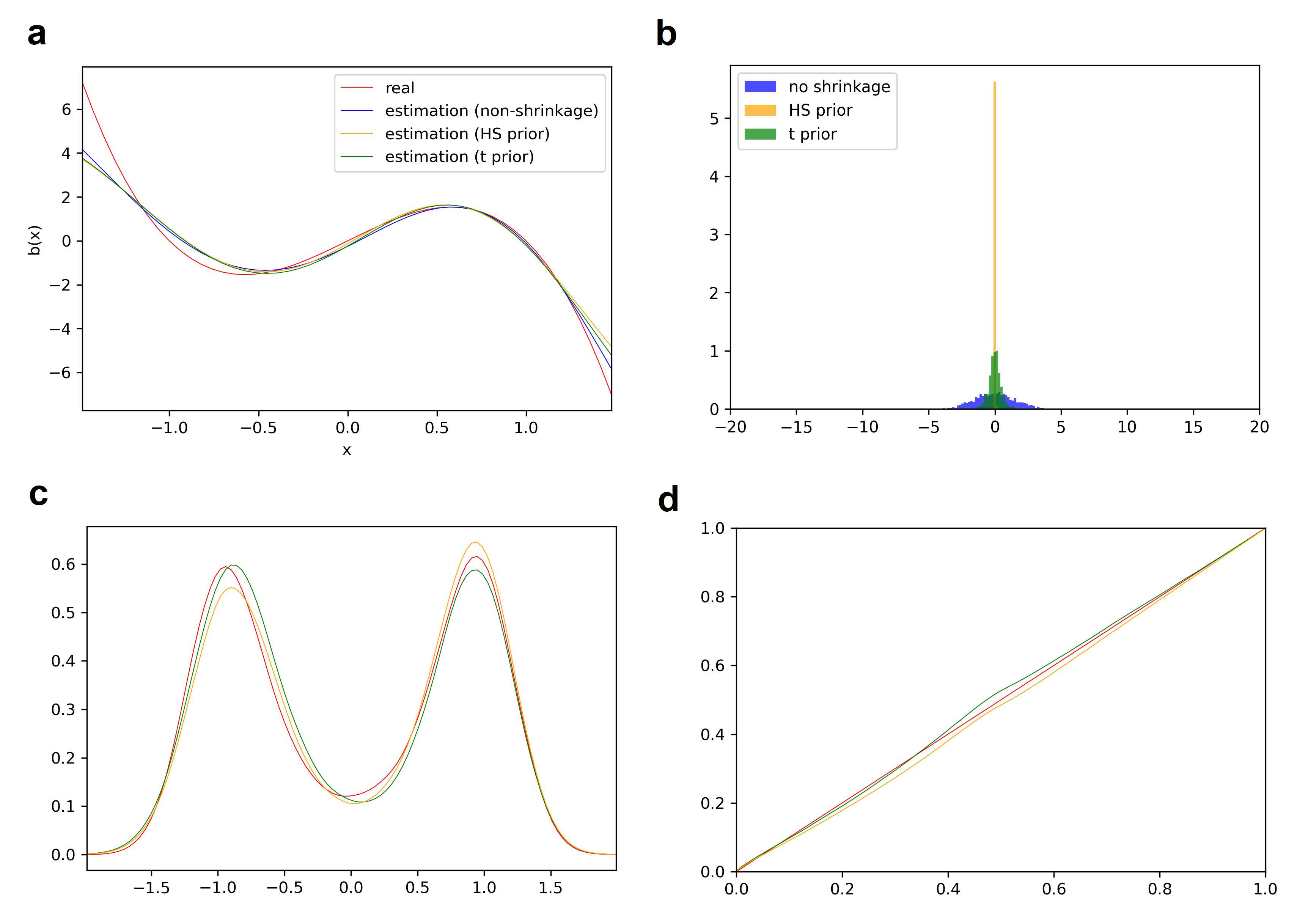}
\caption{Double-well potential SDE. \textbf{a}: comparison of estimated function $\hat b$ with true $b$. \textbf{b}: histogram of $\beta_i$'s. \textbf{c}: comparison of the stationary distributions of the SDE driven by estimated $\hat b$ and true $b$. \textbf{d}: PP-plots of the stationary distributions of the estimated SDE against that of the original SDE.}
\label{fig:dw}
\end{figure}

Figure \ref{fig:dw}-(b) is noteworthy as it shows that both $\tdst$ and HS priors were successful in giving sparse solutions for the weights, $\wtb_i$, with HS prior producing significantly higher degree of sparsity compared to $\tdst$-prior as evidenced from much sharper peak of the histogram near $0$. At the same time other figures show that both the shrinkage priors lead to almost identical $\hat b$ matching the accuracy of the estimate without shrinkage. The MSE and the Kolmogorov metric values in all the cases are in the range 0.27-0.29 and 0.7 - 0.8, respectively.

Better accuracy is expected with more data, which can be either because of higher frequency of observations (that is, lower value of $\Delta$) or more observations over longer time range $[0,T]$. 

\begin{table}[H]
\centering
\begin{tabular}{|p{2.4cm}||p{1.2cm}|p{1.2cm}|p{1.2cm}|p{1.2cm}|p{1.2cm}|p{1.2cm}|} 
 \hline
 $T$ & 40 & 40 & 40 & 80 & 60 & 20 \\ 
 $\Delta$ & 0.025 & 0.05 & 0.1 & 0.05 & 0.05 & 0.05 \\ [0.5ex] 
 \hline\hline
 $\tdst$-prior & 0.3035 &	0.3966 & 0.7234 & 0.2818 & 0.2971 &	0.5128  \\ 
 HS-prior & 0.3258 &	0.4193 &	0.9442 &	0.2890 &	0.3362 & 0.7106 \\[1ex] 
 \hline
\end{tabular}
\caption{MSE of $\hat b$ for $\tdst$ and HS priors.}
\label{DW-table:1}
\end{table}

\begin{table}[H]
\centering
\begin{tabular}{|p{2.4cm}||p{1.2cm}|p{1.2cm}|p{1.2cm}|p{1.2cm}|p{1.2cm}|p{1.2cm}|} 
 \hline
 $T$ & 40 & 40 & 40 & 80 & 60 & 20 \\ 
 $\Delta$ & 0.025 & 0.05 & 0.1 & 0.05 & 0.05 & 0.05 \\ [0.5ex] 
 \hline\hline
 $\tdst$-prior & 0.1494 & 0.2102 & 0.2047 & 0.0707 & 0.0627 & 0.2702  \\ 
 HS-prior & 0.1325 & 0.2047 & 0.2486 & 0.0817 & 0.0913 & 0.2869 \\[1ex] 
 \hline
\end{tabular}
\caption{Kolmogorov metric between the CDFs of the stationary distributions of the estimated and true SDEs.}
\label{DW-table:2}
\end{table}

 This is corroborated by Table \ref{DW-table:1} and Table \ref{DW-table:2}, which list the values of MSE and the Kolmogorov metric in two cases -  (i)  fixed observation-time range $[0,T]$, but increasing $\Delta$, and (ii) fixed observation frequency $\Delta$ but increasing time range $[0,T]$. \\

\np
{\bf Model 2: Variant of Double-well potential SDE}

Our second model is a variant of the above double-well potential SDE with a multiplicative noise structure. The specific equation is given by
\begin{align*}
dX(t) = X(t)(1-X^2(t)) + \vas \sqrt{1+X(t)^2} dW(t).
\end{align*}
The multiplicative noise adds to the complexity of the already complex nonlinear dynamics of the original double-well process. The stationary density of the SDE is given by 
\begin{align}\label{eq:DW-v}
\pi_{st}(x) \propto \vas^{-2}(1+x^2)^{2\vas^{-2}-1}\exp\lf(-x^2/\vas^2\ri)
\end{align}
The stationary distribution is bimodal if $\vas <1$, but it becomes unimodal if $\vas\geq 1$ with sharper peak with increasing $\vas$. We consider two cases, $\vas=1$ and $\vas=0.5$. 

\np
{\bf Case: $\vas=1$:} We first consider (discrete) observations from \eqref{eq:DW-v} with true $\vas=1$, and use Algorithm \ref{algo:Gibbs-t} and Algorithm \ref{algo:Gibbs-HS} to estimate the drift function $b$ and the diffusion parameter $\vas$. For Algorithm \ref{algo:Gibbs-t}, we use the same $\tdst$-distribution as the last example. For Algorithm \ref{algo:Gibbs-HS}, classical HS prior was shrinking all the weights $\wtb_k$ to near $0$, and it was necessary to use heavier-tailed distribution on the local variance component $\loc_k$ (than $\Fdst(\nu_1=1,\nu_2=1, c=1)$-distribution) to counter the strong global shrinkage effect of $\glo$. We use $\Fdst(\nu_1=1,\nu_2=0.3, c=1)$-distribution on $\loc_k$ and the usual $\Fdst(\nu_1=1,\nu_2=1, c=1)$-distribution on $\glo$, that is, the following values of hyperparameters:  $\hyp_i = 0.5, \hyp^0 = \hpa= \hpa^0= 1/2, \hpb=\hpb^0=1$. As before, we use $\SC{IG}(1,2)$-prior on the diffusion-parameter $\vas^2$ in both the algorithms. The efficacy of the algorithms is demonstrated in Figure \ref{fig:dw-v-1}. The MSE and the Kolmogorov metric values for cases corresponding to no-shrinkage, $\tdst$-prior and the above HS-type prior are comparable and are again in the range 0.24-0.27 and about 0.07, respectively.
The values of the estimate, $\hat{\vas^2}$, given by Algorithm \ref{algo:Gibbs-t} and Algorithm \ref{algo:Gibbs-HS}  are  $0.998$ and $0.974$, respectively.
Again, the global-local setup of a HS-type prior (Algorithm \ref{algo:Gibbs-HS}) was able to produce significantly higher shrinkage while achieving comparable level of accuracy.

As before, we list in Table \ref{table:dw-v-1:1} and Table \ref{table:dw-v-1:2} the values of MSE and the Kolmogorov metric in two cases  -  (i)  fixed observation-time range $[0,T]$, but increasing $\Delta$, and (ii) fixed observation frequency $\Delta$ but increasing time range $[0,T]$.
As expected, better accuracy is obtained with more observations, with increasing time range $[0,T]$ of observations being more important than a fixed one with higher frequency of observations (that is smaller $\Delta$). This is natural as data over longer time range reveals more about the behavior of the underlying SDE.

\begin{table}[h!]
\centering
\begin{tabular}{|p{2.4cm}||p{1.2cm}|p{1.2cm}|p{1.2cm}|p{1.2cm}|p{1.2cm}|p{1.2cm}|} 
 \hline
 $T$ & 40 & 40 & 40 & 80 & 60 & 20 \\ 
 $\Delta$ & 0.025 & 0.05 & 0.1 & 0.05 & 0.05 & 0.05 \\ [0.5ex] 
 \hline\hline
 $\tdst$-prior & 0.3609 & 0.4512 & 0.5632 & 0.2702 & 0.3443 & 0.8265  \\ 
 HS-type-prior & 0.3838 & 0.3972 & 0.7868 & 0.2404 & 0.3319 & 0.8858
 \\[1ex] 
 \hline
\end{tabular}
\caption{MSE with different priors.}
\label{table:dw-v-1:1}
\end{table}

\begin{figure}[H]
\centering
\includegraphics[width=0.8\textwidth]{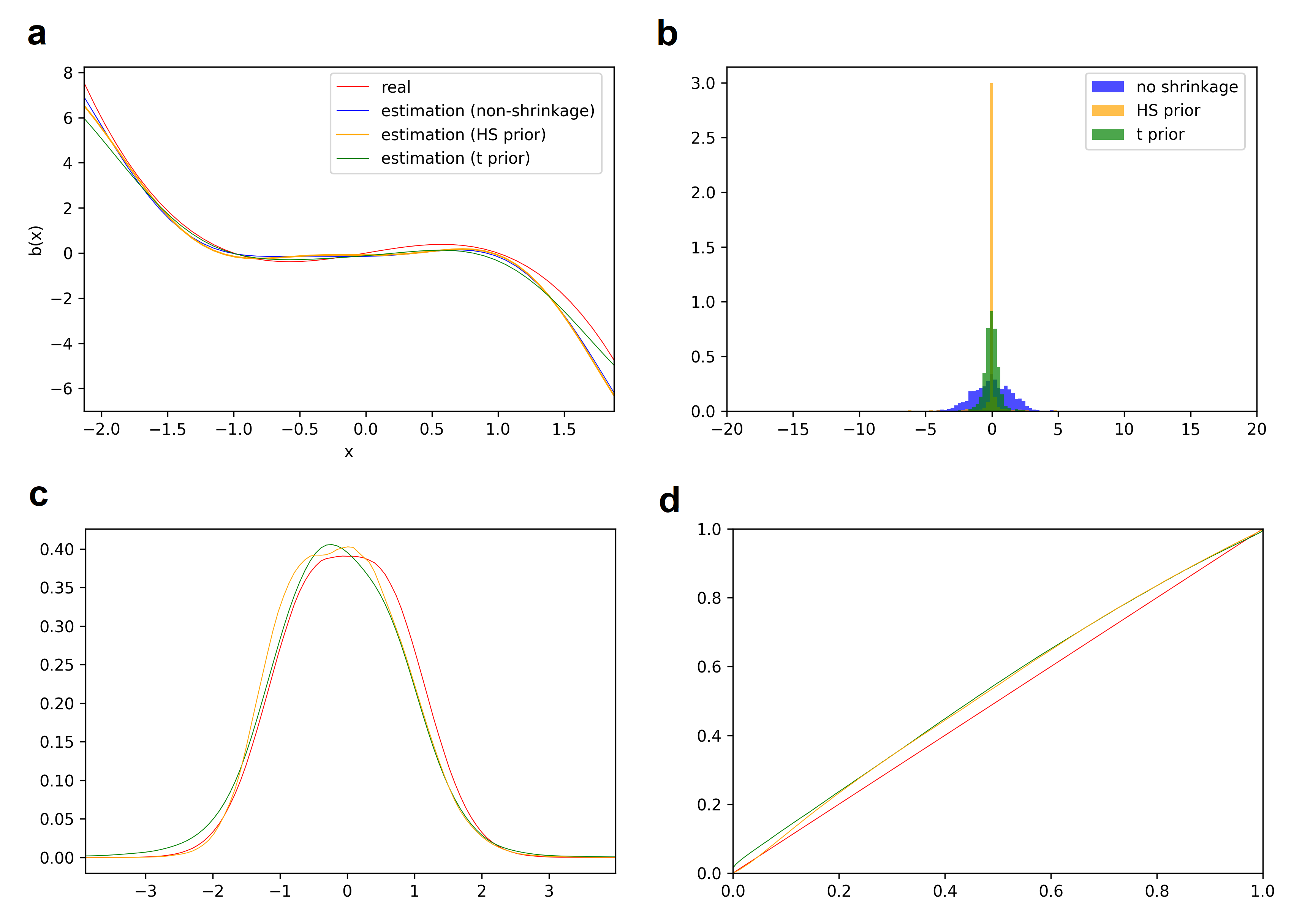}
\caption{Variant of double-well model ($\vas=1$). Descriptions of \textbf{a}, \textbf{b}, \textbf{c}, \textbf{d} are similar to that in Figure \ref{fig:dw}.
}
\label{fig:dw-v-1}
\end{figure}

\begin{table}[h!]
\centering
\begin{tabular}{|p{2.4cm}||p{1.2cm}|p{1.2cm}|p{1.2cm}|p{1.2cm}|p{1.2cm}|p{1.2cm}|} 
 \hline
 $T$ & 40 & 40 & 40 & 80 & 60 & 20 \\ 
 $\Delta$ & 0.025 & 0.05 & 0.1 & 0.05 & 0.05 & 0.05 \\ [0.5ex] 
 \hline\hline
 $\tdst$-prior & 0.107 & 0.1091 & 0.1291 & 0.071 & 0.1225 & 0.1826  \\ 
 HS-prior & 0.1187 & 0.107 & 0.1259 & 0.077 & 0.1268 & 0.1708 \\[1ex] 
 \hline
\end{tabular}
\caption{Kolmogorov metric between the CDFs with different prior.}
\label{table:dw-v-1:2}
\end{table}

\np
{\bf Case: $\vas=0.5$:} We also consider data points from \eqref{eq:DW-v} with $\vas = 0.5$ over the interval $[0,40]$ (with $\Delta=0.05$). As mentioned, the true stationary distribution in this case is distinctly bimodal. Bimodality and multiplicative noise make estimation of the drift function $b$ particularly a challenging task. Figure \ref{fig:dw-v-0.5} compares the estimated and the true $b$, and the corresponding stationary distributions. The hyperparameter values used in Algorithm \ref{algo:Gibbs-t} and Algorithm \ref{algo:Gibbs-HS} are the same as in the previous case. 

\begin{figure}[h]
\centering
\includegraphics[width=0.9\textwidth]{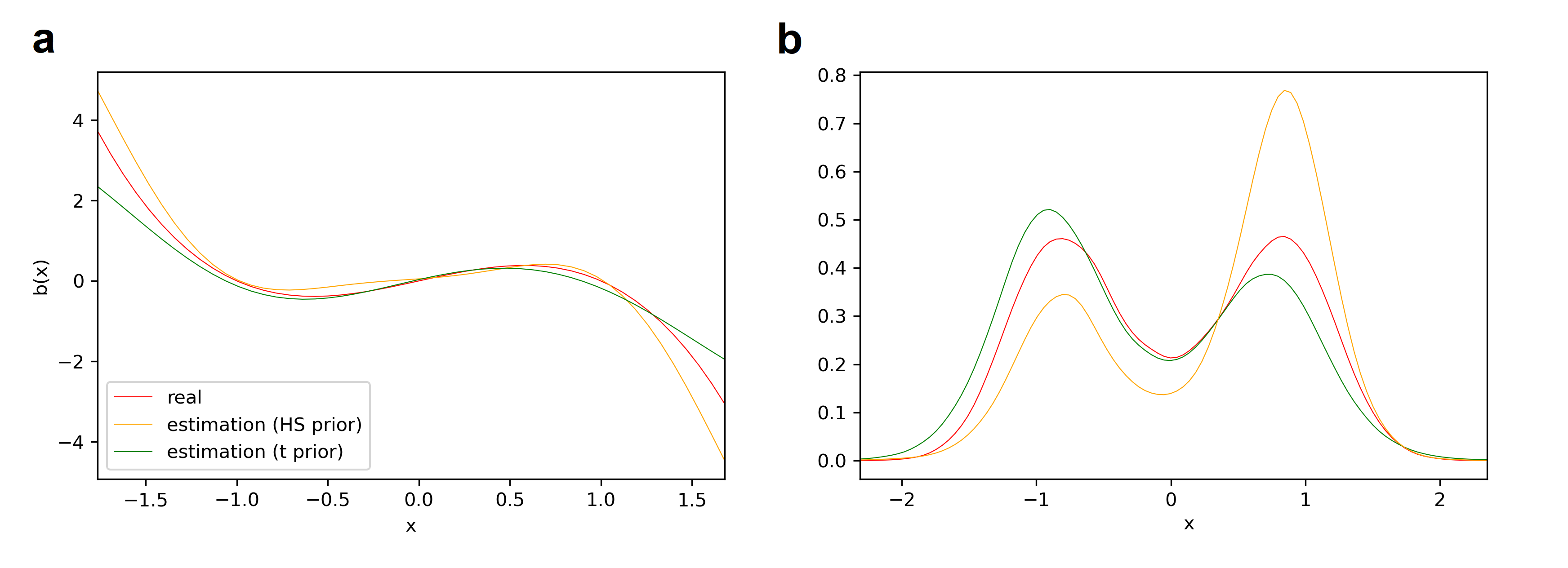}
\caption{Variant of double-well model. ($\vas=0.5$). \textbf{a}: comparison of estimated function $\hat b$ with true $b$. \textbf{b}: comparison of the stationary distributions of the SDE driven by estimated $\hat b$ and true $b$.}
\label{fig:dw-v-0.5}
\end{figure}
The estimated $\hat b$ for both the priors match closely with true $b$ (on a large part of the $x$-axis), and the estimator $\hat{\vas^2} = 0.247$ and $0.258$, which is almost same as the true $\vas^2 = 0.25$. But here Algorithm \ref{algo:Gibbs-t} with $\tdst$-prior on the weights gives a much more accurate result than Algorithm \ref{algo:Gibbs-HS} with the HS-type prior. This is clear from the plots of the different stationary distributions, where HS prior respectively underestimates and overestimates the modes at $-1$ and $1$. The MSE values for estimated $\hat b$ corresponding to $\tdst$ and HS priors are respectively 0.051 and 0.04, which are comparable. But the Kolmogorov metric between the CDFs of the true stationary distribution (c.f \eqref{eq:DW-v}) and the stationary distribution with $\hat b$ as the drift in the case of $\tdst$ and HS priors is respectively 0.05 and 0.17 showing the edge that the Algorithm  \ref{algo:Gibbs-t} had in this case.\\

\np
{\bf Model 3: Michaelis-Menten Kinetics}\\
The Michaelis-Menten is a well-known model in enzymatic kinetics describing the enzymatic substrate conversion process \cite{MiMe1913, Sri21}. The reaction system is given by
\begin{equation}
\label{MM_reactions}
E+ S \xrightleftharpoons[k_{2}]{k_{1}} ES, \quad ES \xrightleftharpoons[k_{m2}]{k_{m1}}  E+P. 
\end{equation}
The full state of the system at time $t$ is given by $X(t) = (X_{E}(t), X_{S}(t), X_{ES}(t), X_{P})$.
The system satisfies the conservation law: $X_{E}(t)+X_{ES}(t) = X_E(0) +X_{ES}(0)$. This gives a reduced $3$-dimensional state which will still be denoted by $X(t) = (X_{E}(t), X_{S}(t), X_{P}).$ The differential equation describing the dynamics is governed by the drift function 
\begin{align*}
b(x) = (-k_1 x_E x_S - k_{m2} x_E x_P + (k_{m1} + k_2) x_{ES}, -k_1 x_E x_S + k_{m1} x_{ES},  k_2 x_{ES} - k_{m2} x_E x_P  )
\end{align*}
Given a set of discrete observations from a stochastic version of this differential equation driven by additive Brownian noise $\vas_{3\times 3} B$, with $\vas = 0.1 I$ over time-range $[0,40]$  generated by taking $\Delta=0.04$ and the conservation constant, $X_E(0)+X_{ES}(0)=2$, we use Algorithm \ref{algo:Gibbs-t} and Algorithm \ref{algo:Gibbs-HS} to estimate the entire drift function $b$ and the (constant) diffusion matrix $\vas$. For Algorithm \ref{algo:Gibbs-t}, we use the hyperparameter values, $\nu=5,$ $U=8I$, and $\SC{IW}_3(1+\text{dim},V=2I_{3\times 3})$-prior (where, dimension, dim$=3$) on $\vas\vas^T$. For Algorithm \ref{algo:Gibbs-t} we use the (multidimensional version of) classical HS prior, and the same inverse-wishart prior on $\vas\vas^T$. The MSE values in both cases came out to be about 0.004 (specifically, 0.00414 for HS and 0.00431 for $\tdst$). Figure \ref{fig:mm} -  \textbf{a}, \textbf{b} and \textbf{c}, respectively, plots the first, second and third component of both the estimator $\hat b$ and the true $b$ when HS-prior is used with $z$-coordinate fixed at $1.073$.
\begin{figure}[H]
\centering
\includegraphics[width=0.9\textwidth]{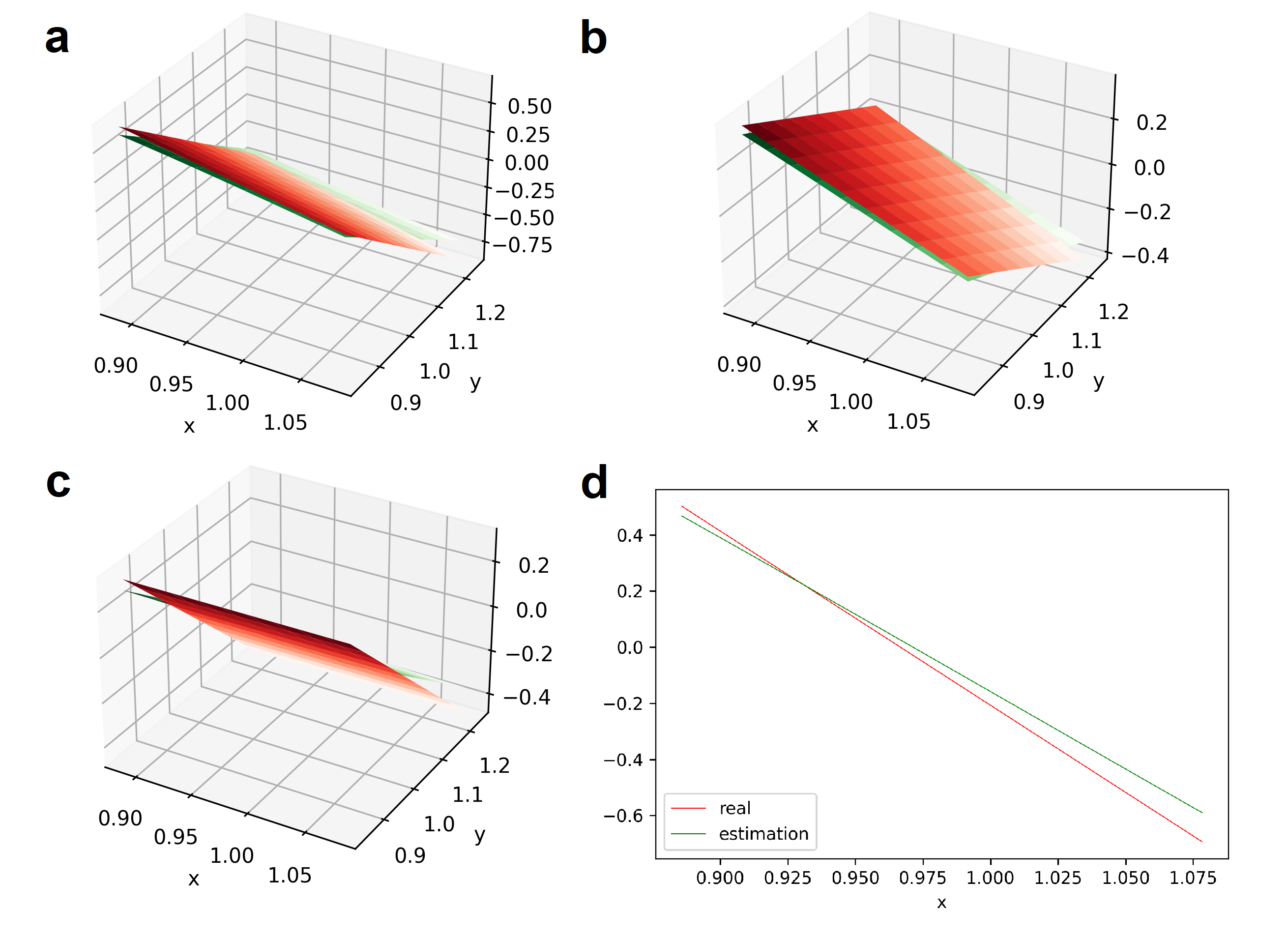}
\caption{Michaelis-Menten Kinetics model with HS-prior. \textbf{a}, \textbf{b}, \textbf{c} show the plots of first, second and third component of the functions $\hat b$ and $b$ with $z$ fixed at $1.073$. \textbf{d} shows a two-dimensional slice of \textbf{a} at $y=1.060$.}
\label{fig:mm}
\end{figure}
The estimated diffusion matrix (via Algorithm \ref{algo:Gibbs-HS}) is given by 
\begin{align*}
\hat{\vas\vas^T} = 
\begin{bmatrix}
    0.01210109 & 0.0001914 & -0.00020334 \\
    0.0001914 &  0.01170018 & 0.00014556 \\
    -0.00020334 & 0.00014556 & 0.01122171 \\
\end{bmatrix}
\end{align*}
which is very close to the true $\vas\vas^T=0.01I$. The corresponding numbers for Algorithm \ref{algo:Gibbs-t} are very similar.

\section{Discussion} \label{sec:dis}

The paper presents a novel theoretical and computational paradigm for  stochastic dynamic models which, on account of its generalizability,  can potentially find its way to several interesting applications. We study two areas --- (a) a class of infinite-dimensional optimization problems, which is broader than what the classical Representer Theorem covers, (b) Bayesian approach to nonparametric inference of stochastic dynamical systems. To our knowledge, this is the first instance of the merging of Bayesian methods, RKHS theory and stochastic differential equations into a single unified platform.
The use of the resulting algorithms on data from well-known SDEs amply demonstrates their ability to learn the true drift functions  to a high degree of accuracy. Specifically, their reliable prediction of long term dynamics beyond the range of data points is a strong testament to this fact. The accuracy measures obtained under the M-M kinetics model lend strong  credence to the relevance of this approach for multivariate settings.

The hierarchical structure of the Bayesian framework  makes the resulting inference scheme computationally scalable while  opening the door to several model extensions. For instance, a semi-parametric model variant  could  be easily implemented  in instances where the stochastic dynamics is known only partially. It is also of interest to study the effectiveness of other types of shrinkage priors in this context. These  extensions would also broadly benefit from the convenience of Gibbs sampling schemes similar to the ones showcased in this article.  The  ‘divide and conquer’ approach  intrinsically encoded in such schemes would  typically allow for  multiple  computational  conveniences, like  parallel computation as and when required.

Several ongoing works are focusing on more general models including sparse and noisy datasets, dynamical systems with jumps and  multiscale stochastic systems,  each of which has its own unique challenges. For example, for SDE models with noisy data the expansion in Theorem \ref{th-rep} does not directly hold as the actual trajectory of the underlying SDE is never observed. The generality of the optimization results in the first part of the paper will play a key role in these cases.

\setcounter{section}{0}
\setcounter{theorem}{0}
\setcounter{equation}{0}
\renewcommand{\theequation}{\thesection.\arabic{equation}}

\appendix
\section{Appendix}

 {\bf[Proof of Lemma \ref{lem:pd-op}]}
 \np
 (i) $\RT$ (ii):  Suppose that $\{Qh_n\} \subset \ran(Q)$ such that $Qh_n \rt g$ as $n\rt \infty$. We need to show that $g = Qh$ for some $h \in \Hsp.$ Notice that in particular $\{Qh_n\}$ is Cauchy. Since $Q$ is uniformly p.d, there exists $\l >0$ such that $\<Qh,h\>  \geq \l \|h\|^2$. This implies $\{h_n\}$ is also a Cauchy sequence, and hence by completeness of $\Hsp$ there exists $h$ such that $h_n \rt h$. By continuity of $Q$, we then have $Qh_n \rt Qh$, and therefore, $Qh=g.$\\

\np
(ii) $\RT$ (iii): Suppose $\ran(Q)^\perp \neq \{0\}$. Let $0\neq y \in \ran(Q)^\perp$. Now $Qy \in \ran(Q)$; hence $\<Qy,y\> =0$. But since $Q$ is p.d this means that $y=0$; in other words, $\ran(Q)^\perp = \{0\}$. Since $\ran(Q)$ is closed by the hypothesis, we get from $\Hsp = \ran(Q)\oplus \ran(Q)^\perp$ that $\ran(Q) = \Hsp.$\\

\np
(iii) $\RT$ (i): Observe that since $Q$ is self-adjoint and p.d, $\<h',h\>_Q \stackrel{def}= \<Qh',h\>$ defines a valid inner product. Furthermore, since $Q$ is surjective, $Q^{-1}$ is a bounded linear operator, that is, $Q^{-1} \in L(\Hsp, \Hsp)$.
Now by Cauchy-Schwartz inequality, 
$|\<h',h\>_Q| \leq \|h'\|_Q \|h\|_Q.$
Taking $h' = Q^{-1}h$, we get $\|h\| \leq \|Q^{-1}\|^{1/2} \|h\|_Q$ which establishes (i).
\qed 

\vs{.2in} \np
Recall the notations described before Algorithm \ref{algo:Gibbs-HS}

\begin{lemma} \label{lem:cond-0}
Suppose that the joint distribution of 
 $\BX_{t_1:t_m} =(X(t_1), X(t_2), \hdots, X(t_m)))$ given the parameters $\bm{\wtb}$ and $\vas\vas^T$ is described by the transition probabilities \eqref{eq:em-dens}. Assume that 
 \begin{itemize}
 \item $\wtb_i| \loc_i, \glo \sim \No_d(\cdot|0,\loc_i\glo I),$
 \item $\loc_i|\theta_i \sim \SC{IG}(\cdot|\hyp_i,\theta_i)$ for $i=1,2,\hdots,m$, $\glo | \theta^0 \sim \SC{IG}(\cdot|\hyp^0,\theta^0)$;
 \item $\theta^0,\theta_i, i=1,2,,\hdots,m$ are independent, and for each $i=1,2,\hdots, m$, $\theta_i  \sim \SC{G}(\cdot|\hpa,\hpb)$ and $\theta^0  \sim \SC{G}(\cdot|\hpa^0,\hpb^0)$
 \item $\vas\vas^T$: $\vas\vas^T  \sim \SC{IW}_d(\cdot|n,V)$.
  \end{itemize}
Then
	\begin{enumerate}[(i)]
		\item $\bm{\wtb}|\salg_{-\bm{\wtb}} \ \sim\ N(\cdot|\bmn, \bcm)$ where 
		\begin{align}
		\label{eq:b-post}
		\begin{aligned}
		\bcm^{-1} =&\ \Delta\bm{\SC{K}}_0^T\sdg\bm{\SC{K}}_0+ \eta^{-1}, \quad \bmn= \bm{C}\bm{\SC{K}}^T_0\sdg \bm{\vartheta}\\
		\sdg_{dm\times dm} =&\ \mbox{diag}\lf((\s\s^T(X(t_1)))^{-1}, (\s\s^T(X(t_2)))^{-1} , \hdots, (\s\s^T(X(t_m)))^{-1}\ri)\\
		\eta_{dm\times dm} =&\ \mbox{diag}\lf(\loc_1\glo, \loc_2\glo,\hdots, \loc_m\glo \ri)\ot I_d\\
		\bm{\vartheta}_{dm\times 1} =&\ \ve_{d\times m}\lf(X(t_1) -x_0, X(t_2) - X(t_1),\hdots, X(t_m) - X(t_{m-1})\ri) 
		\end{aligned}
		\end{align}
		
		\item $\vas\vas^T| \salg_{-\vas\vas^T}  \ \sim\ \SC{IW}_d(n+m,V_{post})$, where
		\begin{align}\label{eq:sigma-post}
		V_{post} =\Delta^{-1}\sum_{k=1}^m \lf(\s_0(X(t_{j-1}))\ri)^{-1}(\vartheta_j - b(X(t_j)\Delta)(\vartheta_j  - b(X(t_j)\Delta)^T\lf(\s^T_0(X(t_{j-1}))\ri)^{-1}+V
		\end{align}
		
		\item Conditioned on $\salg_{-\{\loc_i\}}$,  $\loc_k,\ k=1,2,\hdots,m$ are independent, and  
		\begin{align*}
		\loc_k| \salg_{-\{\loc_i\}}\ \sim\ \SC{IG}\lf(\cdot \Big|(d+2\hyp_k)/2,\  \f{1}{2}\wtb^T_k\wtb_k/\glo+\theta_k\ri)
     	\end{align*}
     	
     	\item $
     	\glo|  \salg_{-\glo} \ \sim\ \SC{IG}\lf(\cdot \Big| (md+2\hyp^0)/2,\  \theta^0+\f{1}{2}\sum_{k=1}^m \wtb_k^T\wtb_k/\loc_k\ri)
     	$
		
		\item Conditioned on $\salg_{-\theta^0,\{\theta_i\}}$, $\theta^0, \theta_k,\ k=1,2,\hdots,m$ are independent
		\begin{align*}
		\theta_k| \salg_{-\theta^0,\{\theta_i\}}\ \sim\ \SC{G}\lf(\cdot|\hyp_k+\hpa,\hpb+1/\loc_k\ri), \quad \theta| \SC{G}_{-\theta^0,\{\theta_i\}}\ \sim\ \SC{G}\lf(\cdot|\hyp^0+\hpa^0,\hpb^0+1/\glo\ri)
		\end{align*}
		
	\end{enumerate}
	
\end{lemma}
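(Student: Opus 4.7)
My plan is to derive each conditional distribution by isolating the factors of the joint density that depend on the parameter in question and recognizing the resulting kernel as one of the named distributions in the list. Because the hierarchy is a directed acyclic graph with each level conjugate to the next, every full conditional reduces to a standard update.

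First I would write out the joint density of the data and all parameters as the product
\[
p(\BX_{t_1:t_m}\mid \bm{\wtb},\vas\vas^T)\,\prod_{k=1}^m p(\wtb_k\mid \loc_k,\glo)\,\prod_{k=1}^m p(\loc_k\mid\theta_k)\,p(\glo\mid\theta^0)\,\prod_{k=1}^m p(\theta_k)\,p(\theta^0)\,p(\vas\vas^T),
\]
where the likelihood factors as a product of Gaussians in increments $\vartheta_j = X(t_j)-X(t_{j-1})$ with mean $b(X(t_{j-1}))\Delta$ and covariance $\s_0(X(t_{j-1}))\vas\vas^T\s_0^T(X(t_{j-1}))\Delta$. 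The crucial observation for (i) is that $b(X(t_{j-1})) = \sum_k \knl_0(X(t_{j-1}),X(t_k))\wtb_k$, so stacking the mean increments gives $\bm{\SC{K}}_0\bm{\wtb}$ and the log-likelihood becomes a quadratic form in $\bm{\wtb}$ with precision $\Delta\bm{\SC{K}}_0^T\sdg\bm{\SC{K}}_0$ and linear term $\bm{\SC{K}}_0^T\sdg\bm{\vartheta}$ (after completing the square, the $\Delta^2$ in the quadratic and $\Delta$ in the cross-term combine with the $\Delta^{-1}$ from the inverse covariance). Combining this with the Gaussian prior precision $\eta^{-1}$ via standard normal--normal conjugacy yields the Gaussian posterior in \eqref{eq:b-post}.

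For (ii), I would rewrite each likelihood factor by absorbing $\s_0(X(t_{j-1}))$ into the increment and using $|\s_0\vas\vas^T\s_0^T|=|\s_0|^2|\vas\vas^T|$ so the $\vas\vas^T$-dependent part is $|\vas\vas^T|^{-m/2}\exp\bigl(-\tfrac{1}{2}\mathrm{tr}[(\vas\vas^T)^{-1}S]\bigr)$ with $S = \Delta^{-1}\sum_j \s_0^{-1}(X(t_{j-1}))(\vartheta_j-b(X(t_{j-1}))\Delta)(\vartheta_j-b(X(t_{j-1}))\Delta)^T\s_0^{-T}(X(t_{j-1}))$; multiplying by the $\SC{IW}_d(n,V)$ prior kernel gives $\SC{IW}_d(n+m,S+V)$, which is exactly \eqref{eq:sigma-post}. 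For (iii), only the factor $p(\wtb_k\mid\loc_k,\glo)\,p(\loc_k\mid\theta_k) \propto \loc_k^{-(d/2+\hyp_k+1)}\exp\bigl(-[\theta_k+\tfrac{1}{2}\wtb_k^T\wtb_k/\glo]/\loc_k\bigr)$ depends on $\loc_k$, which is the $\SC{IG}$ kernel with the stated parameters; independence across $k$ is immediate from the product form. For (iv), the analogous computation collects the $\glo$-dependence $\glo^{-(md/2+\hyp^0+1)}\exp\bigl(-[\theta^0+\tfrac{1}{2}\sum_k\wtb_k^T\wtb_k/\loc_k]/\glo\bigr)$. For (v), the only factor containing $\theta_k$ is $p(\loc_k\mid\theta_k)\,p(\theta_k) \propto \theta_k^{\hyp_k+\hpa-1}\exp\bigl(-(\hpb+1/\loc_k)\theta_k\bigr)$, giving the Gamma kernel; the analogue holds for $\theta^0$, and independence across $\theta_k,\theta^0$ follows since the joint factorizes.

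No step presents a genuine obstacle; the only mild bookkeeping hazard is (i), where one must keep careful track of the powers of $\Delta$ that appear in the covariance $\s_0\vas\vas^T\s_0^T\Delta$ and correctly align the block structure of $\sdg$, $\eta$, and $\bm{\SC{K}}_0$ so that the quadratic form in $\bm{\wtb}$ arising from completing the square produces exactly $\Delta\bm{\SC{K}}_0^T\sdg\bm{\SC{K}}_0+\eta^{-1}$ rather than a version off by a factor of $\Delta$. All other parts are direct conjugacy identifications.
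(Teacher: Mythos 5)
Your proposal is correct and follows essentially the same route as the paper's proof: write the joint density, isolate the factors depending on each parameter, and identify the resulting kernel by conjugacy, with the only nontrivial bookkeeping being the powers of $\Delta$ and the block structure in the Gaussian update for $\bm{\wtb}$ and the trace rewriting for the inverse-Wishart update of $\vas\vas^T$ --- exactly the computations the paper carries out. No gaps; the derivations of (iii)--(v), including the $\theta_k^{\hyp_k}$ factor from the inverse-gamma normalizing constant, match the paper's argument.
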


\begin{proof}
	By a slight abuse of notation, we use $f$ as a generic symbol for various conditional densities below.
	Notice that 
	$$f\lf(\bm{\wtb}|\salg_{-\bm{\wtb}}\ri) \propto \exp\lf\{-\f{1}{2}\lf(\SC{E}_0 +\sum_{k=1}^m\wtb_k^T\wtb_k/(\loc_k\glo)\ri)\ri\},$$
	where 
	\begin{align*}
	\SC{E}_0 =&\ \Delta^{-1}\sum_{k=0}^{m-1}\lf[(\vartheta_{k+1} - \Delta b(X(t_k)))^T\lf( \s\s^T(X(t_k))\ri)^{-1}(\vartheta_{k+1} - \Delta b(X(t_k)))\ri]\\
	=&\ \sum_{k=0}^{m-1} \Delta^{-1}\vartheta_{k+1}^T \lf( \s\s^T(X(t_k))\ri)^{-1} \vartheta_{k+1} + \Delta b^T(X(t_k) \lf( \s\s^T(X(t_k))\ri)^{-1} b(X(t_k))\\
	& \hs{.2cm} -2 \vartheta_{k+1}^T\lf( \s\s^T(X(t_k))\ri)^{-1}b(X(t_k)).
	\end{align*}
	Here $\vartheta_{k+1} = X(t_{k+1}) - X(t_{k}),$ and  recall that $b(x) = \sum_{k=1}^m\knl_0 (x, X(t_k)).$
	Now
	$$b(X(t_k)) = \sum_{j=1}^m \knl_0(X(t_k), X(t_j))\wtb_j = \bm{\SC{K}}_0(X(t_k),*) \bm{\wtb},$$
	and hence
	\begin{align*}
	\sum_{k=0}^{m-1}\vartheta_{k+1}^T\lf( \s\s^T(X(t_k))\ri)^{-1}b(X(t_k)) = &\ \bm{\vartheta}^T \sdg \bm{\SC{K}_0}\bm{\wtb}\\
	\sum_{k=0}^{m-1}b^T(X(t_k) \lf( \s\s^T(X(t_k))\ri)^{-1} b(X(t_k))=& \bm{\wtb}^T \bm{\SC{K}_0}^T\sdg \bm{\SC{K}_0}\bm{\wtb}.
	\end{align*}
	Since $\sum_{k=1}^m\wtb_k^T\wtb_k/(\loc_k\glo) =  \bm{\wtb}^T \bm{\eta}^{-1} \bm{\wtb}$, it follows that 
	\begin{align*}
	f\lf(\bm{\wtb}|\salg_{-\bm{\wtb}}\ri) = N(\bm{\wtb} | \bmn, \bcm)
	\end{align*}
	where
	\begin{align*}
	\bcm^{-1} = \Delta  \bm{\SC{K}_0}^T\sdg \bm{\SC{K}_0}+\bm{\eta}^{-1}, \quad \bmn = \bcm\bm{\SC{K}_0}\sdg \bm{\vartheta}
	\end{align*}
	with $\sdg$ as in \eqref{eq:b-post}.
	Next note that
	\begin{align*}
	f\lf(\vas\vas^T|\salg_{-\vas\vas^T}\ri) \propto&\ \det\lf((\vas\vas^T)\ri)^{-m/2}\exp\lf\{-\f{1}{2}\SC{E}_0\ri\}
	\det\lf((\vas\vas^T)\ri)^{\f{-(n+d+1)}{2}}\exp\lf\{-\f{1}{2}\tr\lf(V(\vas\vas^T)^{-1}\ri)\ri\}\\
	=& \det\lf((\vas\vas^T)\ri)^{\f{-(n+m+d+1)}{2}}\exp\Big\{-\f{1}{2}\tr\Big[\Big(\Delta^{-1}\sum_{k=1}^m\s^{-1}_0(X(t_k))(\vartheta_{k+1} - \Delta b(X(t_k))) \\
	& \ \times  (\vartheta_{k+1} - \Delta b(X(t_k)))^T\lf(\s^{T}_0(X(t_k))\ri)^{-1}+V\Big)(\vas\vas^T)^{-1}\Big]\Big\}
	\end{align*}
	which proves the assertion.
	Next note that
	\begin{align*}
	f\lf(\loc_1,\loc_2,\hdots,\loc_m|\salg_{-\{\loc_i\}}\ri) \propto&\ (\glo)^{-md/2}\prod_{k=1}^m (\loc_k)^{-d/2}\exp\lf\{-\f{1}{2}(\loc_k\glo)^{-1}\wtb_k^T  \wtb_k\}\ri\} \\
	& \hs{.2cm} \times \prod_{k=1}^m (\loc_k)^{-(\hyp_k+1)} \exp\lf\{-\f{\theta_k}{\loc_k}\ri\}\\
	\propto &\ \prod_{k=1}^m  (\loc_k)^{-(d+2\hyp_k)/2-1}\exp\lf\{-\lf(\f{1}{2}\wtb^T_k\wtb_k/\glo+\theta_k\ri)/\loc_k\ri\}.
	\end{align*}
	which proves the assertion.
	Similarly, 
	\begin{align*}
	f\lf(\glo|\salg_{-\glo}\ri) =&\ (\glo)^{-md/2}\prod_{k=1}^m (\loc_k)^{-d/2}\exp\lf\{-\f{1}{2}(\loc_k\glo)^{-1}\wtb_k^T  \wtb_k\}\ri\} \times 
 (\glo)^{-(\hyp^0+1)} \exp\lf\{-\f{\theta^0}{\glo}\ri\}\\
	\propto &\  (\glo)^{-(md+2\hyp^0)/2-1}\exp\lf\{-\lf(\theta^0+\f{1}{2}\sum_{k=1}^m \wtb_k^T\wtb_k/\loc_k\ri)\Big/\glo\ri\}.
	\end{align*}
	and (iv) follows.
Finally notice that 
	\begin{align*}
f\lf(\theta, \theta_1,\theta_2,\hdots,\theta_m|\salg_{-\theta,\{\theta_i\}}\ri) \propto&\  \prod_{k=1}^m \theta_k^{\hyp_k}(\loc_k)^{-(\hyp_k+1)} \exp\lf\{-\f{\theta_k}{\loc_k}\ri\}  \times \theta^{\hyp^0} (\glo)^{-(\hyp^0+1)} \exp\lf\{-\f{\theta^0}{\glo}\ri\} \\
& \hs{.2cm} \times \prod_{k=1}^m \theta_k^{\hpa-1}\exp\lf\{-\hpb\theta_k\ri\} \times (\theta^0)^{\hpa^0-1}\exp\lf\{-\hpb^0\theta^0\ri\}  \\
\propto &\ \prod_{k=1}^m \theta_k^{\hyp_k+\hpa-1} \exp\lf\{-(\hpb+1/\loc_k)\theta_k\ri\} \times  (\theta^0)^{\hyp^0+\hpa^0-1} \exp\lf\{-(\hpb^0+1/\glo)\theta\ri\} 
\end{align*}	
which proves (v).

\end{proof}	

\bibliographystyle{plainnat}
\bibliography{Ref-ML}

\end{document}